\newtheorem{theorem}{Theorem}
\newtheorem{proposition}{Proposition}
\newtheorem{definition}{Definition}
\newtheorem{example}{Example}
\newtheorem{remark}{Remark}
\newtheorem{assumption}{Assumption}
\newcommand{\PP}{\mathbb{P}}
\newcommand{\EE}{\mathbb{E}}
\newcommand{\EEE}{\mathcal{E}}
\newcommand{\VV}{\mathbb{V}}
\newcommand{\FF}{\mathcal{F}}
\newcommand{\CC}{\mathcal{C}}
\newcommand{\RR}{\mathcal{R}}
\newcommand{\GG}{\mathcal{G}}
\title{Risk-Sensitive Reinforcement Learning: a Martingale Approach to Reward Uncertainty}
\author{
  Nelson Vadori, Sumitra Ganesh, Prashant Reddy, Manuela Veloso  \\
  JP Morgan AI Research\\
  \footnotesize
  \texttt{\{nelson.n.vadori, sumitra.ganesh, prashant.reddy, manuela.veloso\}@jpmorgan.com}\\ 
  \normalsize
}
\begin{document}

\maketitle

\begin{abstract}
We introduce a novel framework to account for sensitivity to rewards uncertainty in sequential decision-making problems. While risk-sensitive formulations for Markov decision processes studied so far focus on the distribution of the cumulative reward as a whole, we aim at learning  policies sensitive to the uncertain/stochastic nature of the rewards, which has the advantage of being conceptually more meaningful in some cases. To this end, we present a new decomposition of the randomness contained in the cumulative reward based on the Doob decomposition of a stochastic process, and introduce a new conceptual tool - the \textit{chaotic variation} - which can rigorously be interpreted as the risk measure of the martingale component associated to the cumulative reward process. We innovate on the reinforcement learning side by incorporating this new risk-sensitive approach into model-free algorithms, both policy gradient and value function based, and illustrate its relevance on grid world and portfolio optimization problems. 
\end{abstract}

\section{Introduction}
\label{intro}
Classical reinforcement learning (RL) aims at deriving policies that maximize the expected value of the sum (or average) of future rewards, while so-called \textit{risk-sensitive} RL aims at taking into account some measure of variability of the cumulative reward into the learned policy, usually through a risk criterion such as variance or a risk measure (e.g. entropic, CVaR). The common goal to the existing risk-sensitive RL literature (cf. section \ref{relwork}) is to take into account the distribution of the cumulative rewards in order to learn a variety of policies, usually parametrized by a risk parameter such as the mean-variance trade-off, the CVaR percentile or an upper bound on variance. For example, in the mean-variance case, learned policies typically lead to the distribution of cumulative rewards having lower mean but also lower variance (hence we gain confidence on the outcome at the expense of its mean).

 In the existing literature, the chosen risk criterion is applied to the cumulative reward as a whole, and hence does not distinguish between the different sources of randomness contained in it. The motivating example of section \ref{toy} and the Doob decomposition of section \ref{Doob} show that the randomness contained in the cumulative reward actually consists of two components of different nature and having different practical interpretations. If we denote $R_\pi(s_t,s_{t+1}):=R(s_t,\pi(s_t),s_{t+1})$ the reward obtained at time $t+1$ associated to policy $\pi$, the first \textit{chaotic} component exactly captures the non-deterministic nature of the reward, i.e. that it is uncertain (or stochastic) when action $a_t=\pi(s_t)$ is chosen in state $s_t$ (as it depends on $s_{t+1}$). This component is zero if the reward is deterministic. The second \textit{predictable} component replaces the uncertain rewards by their predictable/deterministic projections $\EE[R_\pi(s_t,s_{t+1})|s_t,\pi(s_t)]$, and hence does not depend on their uncertainty/stochasticity but acts as if the reward we get at time $t+1$ after choosing action $\pi(s_t)$ is always what we predicted it to be based on information available at time $t$. This component possesses some variability due to the switching between states, but no due to the uncertain nature of the rewards. Interestingly, in the average reward case, these two components also appear in the limiting process of the central limit theorem for functionals of Markov chains, applied to the average reward $\frac{1}{n}\sum_{t=0}^n R_\pi(s_t,s_{t+1})$. Indeed, the limit of this term, minus its long-range (ergodic) mean and rescaled by $\sqrt{n}$, converges in distribution as $n \to +\infty$ to the sum of two independent normal random variables with respective variances $\sigma_{deter.}^2$ and $\sigma_{chaotic}^2$, where the latter depends on the uncertainty of the rewards, whereas the former only depends on their deterministic projections $\EE[R_\pi(s_t,s_{t+1})|s_t,\pi(s_t)]$ as discussed above. We refer to the supplementary for more details on this observation.

 In this work, we develop a new approach complementary to the existing literature that learns policies sensitive to the variability contained in the chaotic component reflecting the uncertainty/stochasticity of the rewards, discussed above. We will work in a framework where the reward $R(s_t,a_t,s_{t+1},h_{t+1})$ received at $t+1$ after having taken action $a_t$ is \textit{uncertain} in the sense that it depends on the next state $s_{t+1}$ and/or possibly on extra "hidden" information $h_{t+1}$ (cf. the setting of \cite{Shen2013-ds} for example).
 
 We believe that this approach is of interest for the following reasons. First, as mentioned in \cite{Chow2018-eq}, deriving risk criteria that are conceptually meaningful is a topic of current research. In our case, sensitivity to the chaotic reward component can nicely be interpreted as sensitivity to the \textit{uncertainty of the future reward when an action $a_t$ is chosen}, in other words to the difference between the actual reward received at $t+1$ and our best guess of it given information available at time $t$, which is a natural human behavior that is adopted in many real-world situations. For example in section \ref{appl}, we show on a portfolio optimization problem how classical mean-variance based risk-sensitive RL counterintuitively leads to reduce investment in both the risky and the risk-free asset as the risk aversion parameter increases (and hence we stop taking advantage of the risk-free asset), whereas our new chaotic mean-variance algorithm leads to reduce investment in the risky asset \textit{only}, which is intuitive as the evolution of the latter is unknown when the investment decision is made and hence can potentially yield significant losses.
 
 Second, depending on the application in mind, mixing the above mentioned predictable and chaotic sources of randomness together and treating it as a single "noise" may lead to undesirable and counterintuitive learned policies, and a more subtle understanding of the randomness is needed in order to gain more interpretability in the learned risk-sensitive policies (cf. section \ref{toy}). It was mentioned in \cite{Mannor2011-vl} that variance as a risk criterion may sometimes lead to counterintuitive policies, in that an agent who has received unexpected large rewards may seek to incur losses in order to keep the variance small, since variance penalizes deviations of the cumulative reward from its expected value. Hence, \cite{Prashanth2016-fz} have considered per-period variance instead. An interesting observation is that when applied to the chaotic component only (which will prove to be a martingale), these two notions of overall and per-period variance bridge under the concept of quadratic variation $\left<M\right>$ of a martingale $M$, due to the equality $\EE[M^2]=\EE[\left<M\right>]$ (proposition \ref{cve}): penalizing reward uncertainty/stochasticity will not force the cumulative reward towards some baseline level and hence will not lead to counterintuitive policies mentioned above.
 
\subsection{Main contributions}
We provide in section \ref{Doob} a novel, conceptually meaningful decomposition of the cumulative reward process that distinguishes between the different sources of randomness contained within it (\textit{chaotic} and \textit{predictable}, cf. discussion above). The key idea of the paper is to apply the Doob decomposition of a stochastic process to the cumulative reward process. We believe that this decomposition in itself is of interest in gaining a better understanding of the variability contained within the reward process.

We then introduce in section \ref{chaotic} a new definition of risk that exactly captures reward uncertainty risk, i.e. the risk related to its non-deterministic nature: the \textit{chaotic variation} associated to the reward process (and to a risk measure).

We incorporate for the first time reward uncertainty risk into model-free value-function based and policy gradient algorithms: the related modified versions of some canonical RL algorithms, so-called \textit{Chaotic Mean-Variance algorithms} (CMV), are presented in section \ref{rlchaos}. These algorithms are applied to grid world and portfolio optimization problems in section \ref{appl}. Although the focus in the main text is on the chaotic mean-variance case, our analysis in section \ref{chaotic} allows for general risk measures and we discuss some of them in the supplementary, such as the chaotic CVaR and Sharpe ratio.

\subsection{Motivating example}
\label{toy}
 As mentioned above, we are interested in developing a general RL framework to account for reward uncertainty in learned policies. The example presented in this section aims at illustrating what could sometimes go wrong when taking variance as a measure of risk, as it is commonly the case in the risk-sensitive RL literature (cf. section \ref{relwork}). We consider the episodic toy example of $T$ timesteps where there are 2 states, 2 actions, and at each state transition the probability to reach either states is the same, i.e. $P(s_{0}=2)=P(s_{0}=1)=P(s_{t+1}=1|s_t,a_t)=P(s_{t+1}=2|s_t,a_t)=\frac{1}{2}$. If $s_t=1$, the reward obtained at $t+1$ is 2 if $a_t=1$ and $4+\sigma h_{t+1}$ if $a_t=2$; if $s_t=2$, the rewards obtained at $t+1$ are 10 if $a_t=1$ and $8+\sigma h_{t+1}$ if $a_t=2$, where $h_{t}$ are i.i.d. zero mean unit variance, and the noise $\sigma \geq 0$. Think for example as action $a_t=2$ as investing in a risky stock at time $t$ and getting the corresponding price change at time $t+1$. If $\pi_i$ is the policy that always selects action $i$, then $\pi_1, \pi_2$ have the same expected cumulative reward of $6T$. We have plotted in figure \ref{fig-toy} the rewards obtained for the 2 policies in each state. 

 It can be proved that if the noise $\sigma^2$ is small enough (cf. supplementary for a precise quantitative statement), policy $\pi_1$ will be established as less risky than $\pi_2$ according to the variance criterion. Nevertheless, we know that the cumulative reward earned using $\pi_1$ over $T$ timesteps is bounded from below by $2T$ with probability 1. On the other hand, since $\pi_2$ contains the noise term $h_t$, there is always a positive probability that the reward hits any arbitrarily low value. That is, $\pi_2$ leaves the agent with a component that could constitute a true risk for him, whereas $\pi_1$ doesn't, but is still established as the most risky according to the variance criterion. Hence, the risky noise component is not detected: section \ref{Doob} will establish that this risky component is in fact the martingale associated to the Doob decomposition of the cumulative reward process. On the other hand, the conceptual tool introduced in this paper, the \textit{chaotic variation}, when applied to the mean-variance case, will lead to a risk penalty term of $0$ for $\pi_1$ and proportional to $\sigma^2$ for $\pi_2$ (cf. supplementary), i.e. exactly the desired outcome that the penalty term is proportional to reward uncertainty only.

Another observation is the following: if $\sigma=0$, rewards are deterministic and the obvious optimal policy is to simply take the best reward in each state, i.e. $a_t=2$ if $s_t=1$ and $a_t=1$ if $s_t=2$. This optimal strategy generates a variance over $T$ timesteps of $9T$, due to the switching between states. If one chooses the risk aversion coefficient in the mean-variance trade-off to be high enough, one will then choose not to execute that obvious optimal policy and rather execute policy $\pi_2$ as it generates a lower variance of $4T$ over $T$ timesteps. On the other hand, the \textit{chaotic variation}, when applied to the mean-variance case, will lead to a risk penalty term of $0$ for all policies (since rewards are deterministic) and hence one will always choose the optimal policy, whatever the risk aversion coefficient is.

\begin{figure}[ht]
\begin{center} 
\centerline{\includegraphics[scale=0.4]{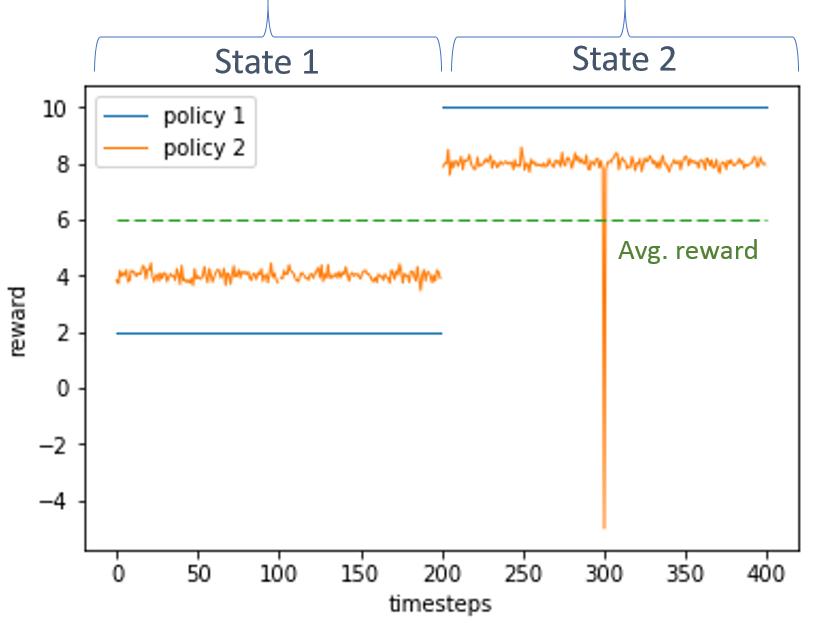}}
\caption{Rewards for 2 policies in state 1 (timesteps 1 to 200) and state 2 (timesteps 200 to 400). State transitions are random but for readability we have grouped together on the plot the timesteps corresponding to each state. The noise $\sigma=0.16$}
\label{fig-toy}
\end{center}
\end{figure}

\subsection{Related work}
\label{relwork}
There exists an interesting and growing body of literature on the topic of risk-sensitive RL. Our work contrasts with the latter in that we focus on learning policies sensitive to the uncertain nature of the rewards whereas previous work apply risk criterion to the cumulative reward as a whole, and hence do not distinguish between the different sources of randomness contained in it. Previously studied algorithms can be split into those that (i) are value-function based (cf. value iteration and policy iteration), and (ii) employ policy gradient methods that update the policy parameters in the direction of the gradient of the risk-flavored performance metric. In category (i), we cite the work of \cite{Mihatsch2002-ck} in which they transform TD increments with a function that is linear by parts, and \cite{Borkar2002}, \cite{Borkar2005}, \cite{Borkar2010-to} which focuses on the exponential utility function in an ergodic average-reward type setting. Work of category (ii) are more numerous, and can be split into those that use (Actor-Critic) and do not use (Monte Carlo) a specific representation of the value function, the former usually requiring TD-type updates of the value function. For large action/state spaces, one typically employs function approximation of the value function in the actor-critic setting, which introduces bias but improves variance. On the RL front, \cite{Prashanth2018-ki} provides a thorough literature review. Some references are interested in the full distribution of the cumulative reward, cf. \cite{Morimura2010-ec}, \cite{Morimura2010-wk}, \cite{Bellemare2017lk}, \cite{Defourny2008-tk} while others focus on specific risk measures, e.g. CVaR \cite{Tamar_A_Glassner_Y_Manor_S2015-zn}, \cite{Chow2018-eq}. Some authors define their own notion of risk with specific applications in mind, such as \cite{Geibel2005-lz} for which risks refers to the likelihood of reaching so-called "error states". \cite{Tamar2015-sj} provides a general methodology to handle all coherent risk measures.

The most-studied framework remains the variance-related one. Variance can be understood as the variance of the cumulative reward, or as the sum of variances of step-by-step rewards, cf. the work of \cite{Prashanth2016-fz}, \cite{JMLR:v17:14-335}, \cite{Tamar2012-ki} (in a stochastic shortest path context), \cite{La2013-pq}. \cite{Sobel_MJ1982-vo} was among the firsts to study the mean-variance case, and he explains why using variance as a measure of risk may cause problems, more specifically the inability to use policy iteration type algorithms due to the lack of monotonicity of the variance operator, while \cite{Mannor2011-vl} shows that these problems can be hard to solve. \cite{JMLR:v17:14-335} provides a methodology to estimate the variance of the cumulative reward via a TD-type approach.

\section{Markov Decision Processes: Doob decomposition and Martingale formulation}
\label{Doob}

We study a Markov Decision Process (MDP) $(\mathbb{S},\mathbb{A},P,R,\gamma,H,\mathbb{H})$, where we allow the reward function $R$ to be general of the form $R_{t+1}:=R(s_t,a_t,s_{t+1},h_{t+1})$, and the hidden process $h_t \in \mathbb{H}$ as in \cite{Shen2013-ds}. The kernels $P$ and $H$ are defined in assumption \ref{a1}. We use the notation that the reward $R_{t+1}$ is received at time $t+1$ after having taken action $a_t \in \mathbb{A}$ at state $s_t \in \mathbb{S}$ at time $t$. We denote the cumulative reward to go and the conditional average reward:
$$
\RR_{t}:=\sum_{t'=t}^\infty \gamma^{t'-t} R_{t'+1}, \hspace{3mm} \overline{R}(s_t,a_t):=\EE[R_{t+1}|s_t,a_t]
$$
To keep notations consistent in the episodic and discounted reward setting we impose as usual $\gamma<1$, and $\gamma=1$ only possible in the episodic setting (in the latter case the process stays in the same absorbing state forever once it is reached, with zero associated rewards). The analysis of this section is amenable to the average-reward formulation, which is detailed in the supplementary. We make the below assumption throughout the paper, needed in particular in the proof of theorem \ref{decomp}.
\begin{assumption}
\label{a1}
$R(\cdot,\cdot,\cdot,\cdot)$ is uniformly bounded and the probability kernels $P$ and $H$ satisfy $\PP[s_{t+1} \in B |s_{t},a_{t},h_t]=P(s_t,a_t,B)$, $\PP[h_{t+1} \in B |s_{t},a_{t},h_t,s_{t+1}]=H(s_t,a_t,s_{t+1},B)$.
\end{assumption}
We need the following notations:
$$
\RR_{n,t}:=\sum_{t'=t}^{(n-1) \vee t} \gamma^{t'-t} R_{t'+1}, \, \FF_n:=\sigma(s_t,a_t,h_t, t \leq n)
$$
$\FF_n$ is the sigma-algebra generated by the MDP before or equal to time $n$, i.e. the information available up to time $n$. 
\begin{definition}
We remind that a discrete-time process $X$ is \textit{adapted} to the filtration $\mathbb{F}:=(\FF_n)_{n \geq 0}$ if $X_n$ is $\FF_n$-measurable for all $n$; it is \textit{predictable} if $X_n$ is $\FF_{n-1}$-measurable for all $n$ (i.e. it is known one timestep before); it is a $\mathbb{F}$-martingale if for all $n$, $X_n$ is adapted to $\FF_n$ and $\EE[X_{n+1}-X_n|\FF_n]=0$.
\end{definition}

The below observation is the cornerstone of this paper, as it will formalize precisely the discussion carried on in section \ref{intro} about the decomposition of the reward variability into two conceptually meaningful components. The martingale component captures the uncertain/stochastic part of the reward in section \ref{toy}. The key idea here is to apply the Doob decomposition of a stochastic process (cf. supplementary) to the reward process in order to gain better understanding of its variability.

\begin{theorem}
\label{decomp}
(Doob decomposition of the reward) Let $\pi$ be a policy. $\RR_{n,t}$ can be decomposed in a unique way $\PP_\pi-a.s.$ into the sum of i) a $\FF_t-$measurable random variable ii) a zero-mean predictable process $\RR_{n,t}^{\pi,pred}$ and iii) a zero-mean martingale $\RR_{n,t}^{chaos}$, with respect to the filtration  $\FF_n$. The decomposition is given by:
$$
\RR_{n,t}=\EE_\pi[\RR_{n,t}|s_t]+\RR_{n,t}^{\pi,pred}+\RR_{n,t}^{chaos}
$$
$$
\RR_{n,t}^{\pi,pred}:=\sum_{t'=t}^{(n-1) \vee t} \gamma^{t'-t} (\overline{R}(s_{t'},a_{t'})-\EE_\pi[R_{t'+1}|s_t])
$$
$$
\RR_{n,t}^{chaos}:=\sum_{t'=t}^{(n-1) \vee t} \gamma^{t'-t} (R_{t'+1}-\overline{R}(s_{t'},a_{t'}))
$$
Denoting $\RR_{t}^{\pi,pred}:=\RR_{\infty,t}^{\pi,pred}$ and $\RR_{t}^{chaos}:=\RR_{\infty,t}^{chaos}$ we get, taking the limit as $n \to +\infty$:
$$
\RR_{t}=\EE_\pi[\RR_{t}|s_t]+\RR_{t}^{\pi,pred}+\RR_{t}^{chaos}
$$
\end{theorem}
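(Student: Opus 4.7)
The plan is to recognize the claimed decomposition as the Doob decomposition of the adapted process $(\RR_{n,t})_{n\geq t}$ with respect to the filtration $(\FF_n)_{n\geq t}$, with $\EE_\pi[\RR_{n,t}|s_t]$ playing the role of the $\FF_t$-measurable ``initial value,'' $\RR_{n,t}^{\pi,pred}$ that of the predictable compensator (further centered conditionally on $s_t$), and $\RR_{n,t}^{chaos}$ that of the martingale remainder. I would first check the algebraic identity $\RR_{n,t}=\EE_\pi[\RR_{n,t}|s_t]+\RR_{n,t}^{\pi,pred}+\RR_{n,t}^{chaos}$ by expanding $\EE_\pi[\RR_{n,t}|s_t]=\sum_{t'}\gamma^{t'-t}\EE_\pi[R_{t'+1}|s_t]$: once the three sums are added, the $\EE_\pi[R_{t'+1}|s_t]$ terms and the $\overline{R}(s_{t'},a_{t'})$ terms cancel pairwise, leaving exactly $\sum_{t'}\gamma^{t'-t}R_{t'+1}$.

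The next step is to verify the measurability properties summand by summand. Since $s_{t'}$ and $a_{t'}$ are $\FF_{t'}$-measurable, each summand of $\RR_{n,t}^{\pi,pred}$ is $\FF_{t'}$-measurable, so the full sum (running up to $t'=n-1$) is $\FF_{n-1}$-measurable, i.e.\ predictable with respect to $(\FF_n)$. Each summand of $\RR_{n,t}^{chaos}$ is instead $\FF_{t'+1}$-measurable through $R_{t'+1}$, so the sum is $\FF_n$-adapted. Finally $\EE_\pi[\RR_{n,t}|s_t]$ is $\sigma(s_t)\subseteq\FF_t$-measurable by construction.

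The crux is showing $\EE[R_{t'+1}|\FF_{t'}]=\overline{R}(s_{t'},a_{t'})$, which immediately yields both the martingale property of $\RR_{n,t}^{chaos}$ (its increment $\gamma^{n-t}(R_{n+1}-\overline{R}(s_n,a_n))$ is then $\FF_n$-centered) and, via the tower property conditional on $s_t$, the zero-mean identities $\EE_\pi[\RR_{n,t}^{\pi,pred}|s_t]=\EE_\pi[\RR_{n,t}^{chaos}|s_t]=0$. Here I would carefully invoke Assumption \ref{a1}: the joint conditional law of $(s_{t'+1},h_{t'+1})$ given $\FF_{t'}$ is the composition of the kernels $P(s_{t'},a_{t'},\cdot)$ and $H(s_{t'},a_{t'},s_{t'+1},\cdot)$, neither of which depends on $h_{t'}$ or on earlier history, so the conditional expectation of $R(s_{t'},a_{t'},s_{t'+1},h_{t'+1})$ collapses to a deterministic function of $(s_{t'},a_{t'})$ alone, namely $\overline{R}(s_{t'},a_{t'})$.

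Uniqueness follows by the classical Doob argument: the difference of any two candidate decompositions would be simultaneously a martingale and a predictable process both null at $t$, so its increments are $\FF_{n-1}$-measurable with vanishing $\FF_{n-1}$-conditional mean, hence identically zero $\PP_\pi$-a.s. For the limit $n\to+\infty$, uniform boundedness of $R$ together with $\gamma<1$ (or the episodic absorbing-state convention) gives absolute convergence of all three series $\PP_\pi$-a.s.\ and in $L^1$, and dominated convergence transfers the martingale, predictability, and zero-mean properties to the limit. I expect the only real obstacle to be precisely the Markov identity in the previous paragraph, where Assumption \ref{a1} must be used with care to collapse the rich information in $\FF_{t'}$ down to $(s_{t'},a_{t'})$; the rest of the argument is essentially bookkeeping.
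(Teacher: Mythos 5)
Your proposal is correct and follows essentially the same route as the paper: both rest on the key identity $\EE_\pi[R_{t'+1}|\FF_{t'}]=\overline{R}(s_{t'},a_{t'})$ obtained from Assumption \ref{a1} by composing the kernels $P$ and $H$, with the rest being the Doob decomposition mechanism. The only (cosmetic) difference is that the paper applies the abstract Doob decomposition to the centered, time-shifted process $Y_n=\RR_{n+t,t}-\EE_\pi[\RR_{n+t,t}|s_t]$ with filtration $\GG_n=\FF_{n+t}$ and then identifies the predictable and martingale parts with the stated formulas, whereas you verify the stated decomposition directly (algebraic identity, measurability, martingale property, uniqueness), which amounts to inlining the same argument.
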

\begin{definition}
We call $\RR_t^{chaos}$ (resp. $\RR_t^{\pi,pred}$) the \textit{chaotic} (resp. \textit{predictable}) reward process associated to $\RR_t$.
\end{definition}

The Doob decomposition of theorem \ref{decomp} consists of decomposing rewards into:
\begin{itemize}
    \item the predictable component $\overline{R}(s_{t'},a_{t'})-\EE_\pi[R_{t'+1}|s_t]$ that accounts for deviations of the predictable projection of the uncertain rewards (i.e. the best guess of the unknown reward $R_{t'+1}$ given information at time $t'$) from the overall average reward. This process possesses some variance due to the switching between states, but not to the reward uncertainty.
    \item the chaotic component $R_{t'+1}-\overline{R}(s_{t'},a_{t'})$ exactly captures the uncertainty of the reward by computing the difference between its actual realization and what we expected it to be based on the information available at time $t'$. In other words, it captures the "surprise" part of the reward. $\RR_t^{chaos}=0$ if and only if $R$ is a deterministic reward, in which case we have $R_{t+1}=\overline{R}(s_{t},a_{t})$ with probability 1.
\end{itemize}

\section{Chaotic Variation of the reward process}
\label{chaotic}

We define below a new conceptual tool, the \textit{chaotic variation} associated to a reward process (and to a conditional risk measure $\rho$), that answers the issues raised in section \ref{toy} and captures the reward uncertainty risk. The concept of risk that could be intuited from section \ref{toy} emerges rigorously as a martingale. We refer to \cite{Detlefsen2005-fq} (cf. supplementary) for the definition of a conditional risk-measure associated to a sigma-algebra $\mathcal{G}$, and we use the notation $\rho^\pi_{s_t}$ to denote the conditional risk measure associated to the sigma-algebra $\sigma(s_t)$.

\begin{definition}
\label{defcv}
(Chaotic variation) Let $\pi$ a policy and $\bm{\rho^\pi}:=(\rho^\pi_{s_t})_{t \geq 0}$ a family of conditional risk measures associated to the process $(s_t)_{t \geq 0}$. The chaotic variation associated to ($\RR_t$, $\bm{\rho^\pi}$) is defined as $\CC_{\bm{\rho^\pi}}[\RR_t](s_t):=\rho^\pi_{s_t}(\RR_t^{chaos})$. That is, the chaotic variation quantifies the risk related to the chaotic reward process.
\end{definition}

In the case of the entropic risk measure, the predictable quadratic variation of the chaotic reward process emerges naturally as a measure of risk as a consequence of martingale theory, as proposition \ref{cve} shows it.
\begin{proposition}
\label{cve}
(chaotic variation in the entropic case).
Let $\pi$ a policy. Using definition \ref{defcv}, for $\beta>0$, let $\rho^{\beta,\pi}_{s_t}(X):=\beta^{-1} \ln \EE_\pi [e^{-\beta X}|s_t]$ be the so-called (conditional) entropic risk measure, obtained as the certainty equivalent $CE_{U,\pi,s_t}(X):=U^{-1}(\EE_\pi[U(X)|s_t])$ of the exponential utility function $U(x):=-e^{-\beta x}$. Then, the chaotic variation satisfies $\CC_{\bm{\rho^{\beta,\pi}}}[\RR_t](s_t) \leq \beta^{-1} \ln \sqrt{\EE_\pi[e^{2\beta^2 \left<\RR_t^{chaos}\right>}|s_t]}$,
where $\left<\RR_t^{chaos}\right>$ is the predictable quadratic variation of the martingale $\RR_t^{chaos}$, given by:
$$\left< \RR_t^{chaos}\right>=\sum_{t'=t}^\infty \gamma^{2(t'-t)} \EE[(R_{t'+1}-\overline{R}(s_{t'},a_{t'}))^2|s_{t'},a_{t'}]
$$
As $\beta \to 0$ we get:
$$
\CC_{\rho^{\beta,\pi}}[\RR_t](s_t)=\frac{\beta}{2} \EE_\pi[\left<\RR_t^{chaos}\right>|s_t]+o(\beta)
$$
\end{proposition}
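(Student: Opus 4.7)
The plan is to handle the two claims separately: the upper bound is a martingale exponential inequality, while the $\beta\to 0$ expansion is a routine Taylor calculation that relies only on the $L^2$-isometry for martingales. Throughout, I write $M:=\RR_t^{chaos}$, which is a zero-mean $\FF_n$-martingale by Theorem~\ref{decomp}, with finite conditional moments of all orders thanks to the uniform boundedness of $R$ in Assumption~\ref{a1} and $\gamma<1$ (or the episodic stopping).

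For the upper bound, the strategy is the factorization
$$
e^{-\beta M}=e^{-\beta M-\beta^2\langle M\rangle}\cdot e^{\beta^2\langle M\rangle},
$$
followed by the conditional Cauchy--Schwarz inequality given $\sigma(s_t)$:
$$
\EE_\pi[e^{-\beta M}|s_t]^2\leq \EE_\pi[e^{-2\beta M-2\beta^2\langle M\rangle}|s_t]\cdot\EE_\pi[e^{2\beta^2\langle M\rangle}|s_t].
$$
The first factor on the right is the conditional expectation of the discrete-time Dol\'eans--Dade exponential $\mathcal{E}(-2\beta M)$, and the key step is to prove that it is at most $1$, i.e.\ that $n\mapsto e^{-2\beta \RR_{n,t}^{chaos}-2\beta^2\langle \RR_{\cdot,t}^{chaos}\rangle_n}$ is a supermartingale started at $1$ at $n=t$. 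Conditionally on $\FF_{n-1}$ this reduces to the one-step estimate
$$
\EE[e^{-2\beta\Delta M_n}|\FF_{n-1}]\leq e^{2\beta^2\EE[(\Delta M_n)^2|\FF_{n-1}]},
$$
where $\Delta M_n=R_{n}-\overline{R}(s_{n-1},a_{n-1})$ is a zero-mean, uniformly bounded random variable by Assumption~\ref{a1}; a Hoeffding-/Bennett-style bound on the conditional Laplace transform gives this. Taking $\ln$ of the Cauchy--Schwarz bound and dividing by $\beta$ then yields the stated inequality.

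For the $\beta\to 0$ asymptotic, the martingale property gives $\EE_\pi[M|s_t]=0$ and the orthogonality of martingale increments gives the $L^2$-isometry $\EE_\pi[M^2|s_t]=\EE_\pi[\langle M\rangle|s_t]$. Since $M$ has uniformly controlled moments, I can Taylor-expand
$$
\EE_\pi[e^{-\beta M}|s_t]=1-\beta\EE_\pi[M|s_t]+\frac{\beta^2}{2}\EE_\pi[M^2|s_t]+o(\beta^2)=1+\frac{\beta^2}{2}\EE_\pi[\langle M\rangle|s_t]+o(\beta^2),
$$
then use $\ln(1+x)=x+o(x)$ and divide by $\beta$ to obtain $\frac{\beta}{2}\EE_\pi[\langle M\rangle|s_t]+o(\beta)$, as claimed.

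The step I expect to be the main obstacle is the discrete-time exponential supermartingale inequality. In continuous time, $\EE[\mathcal{E}(-2\beta M)]\leq 1$ falls out of It\^o's formula almost immediately, whereas in discrete time one must compensate the conditional Laplace transform of each increment against its conditional second moment, which is a Hoeffding-/Bennett-type estimate that genuinely uses the uniform reward bound rather than sub-Gaussianity. A secondary bookkeeping issue is passing from the finite-horizon truncations $\RR_{n,t}^{chaos}$ to the infinite-horizon $\RR_t^{chaos}$ in both the Cauchy--Schwarz and the Taylor steps, but this is routine via dominated convergence using discounting or episodic termination.
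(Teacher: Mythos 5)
Your proposal follows the same route as the paper's proof: the factorization $e^{-\beta M}=e^{-\beta M-\beta^{2}\langle M\rangle}e^{\beta^{2}\langle M\rangle}$, the conditional Cauchy--Schwarz/H\"older step, control of $\EE_\pi[e^{-2\beta M-2\beta^{2}\langle M\rangle}\,|\,s_t]$ by $1$, and a Taylor expansion for the $\beta\to 0$ statement. The asymptotic half is fine and essentially identical to the paper's (the paper obtains $\EE_\pi[(\RR_t^{chaos})^{2}|s_t]=\EE_\pi[\langle\RR_t^{chaos}\rangle|s_t]$ from the martingale $(\RR_{n,t}^{chaos})^{2}-\langle\RR_{n,t}^{chaos}\rangle$; your orthogonality-of-increments argument is the same fact). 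The problem is exactly the step you flag as the main obstacle, and your proposed fix does not close it.

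The one-step estimate $\EE[e^{-2\beta\Delta M_n}\,|\,\FF_{n-1}]\leq e^{2\beta^{2}\EE[(\Delta M_n)^{2}|\FF_{n-1}]}$ is \emph{false} for general bounded, conditionally centered increments, so it cannot be delivered by a Hoeffding- or Bennett-style bound. Hoeffding's lemma controls the conditional Laplace transform by the squared \emph{range}, not the conditional variance; Bennett's bound gives $\EE[e^{\lambda X}]\leq\exp\bigl(\sigma^{2}c^{-2}(e^{\lambda c}-1-\lambda c)\bigr)$ for $|X|\leq c$, $\EE X=0$, $\sigma^{2}=\EE X^{2}$, and since $(e^{u}-1-u)/u^{2}\geq 1/2$ this is \emph{weaker} than $e^{\lambda^{2}\sigma^{2}/2}$, i.e.\ the inequality runs the wrong way. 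Concretely, take an increment equal to $-c$ with small probability $p$ and $cp/(1-p)$ otherwise: then $\EE[e^{-2\beta X}]=1+p\,(e^{2\beta c}-1-2\beta c)+O(p^{2})$ exceeds $e^{2\beta^{2}\EE[X^{2}]}=1+2\beta^{2}c^{2}p+O(p^{2})$ for small $p$, and such skewed conditional reward laws are perfectly admissible under Assumption~\ref{a1}. A Freedman-type argument does give a genuine supermartingale, but with compensator $c^{-2}(e^{2\beta c}-1-2\beta c)\langle M\rangle_n$ rather than $2\beta^{2}\langle M\rangle_n$, which yields a bound with a different (larger, $\beta$- and $c$-dependent) constant in front of $\langle\RR_t^{chaos}\rangle$, not the stated one. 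For what it is worth, the paper's own proof does not supply this one-step estimate either: it simply declares $\exp(-2\beta\RR_{n,t}^{chaos}-2\beta^{2}\langle\RR_{n,t}^{chaos}\rangle)$ to be ``the exponential martingale,'' importing the continuous-time Dol\'eans--Dade identity; so your instinct that this is where the real work lies is correct, but obtaining the inequality with the constant $2\beta^{2}$ requires additional structure on the conditional law of the increments (e.g.\ conditionally Gaussian or symmetric two-point), not just uniform boundedness.
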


\begin{definition}
\label{mvdef} (Chaotic variance) The chaotic  variance associated to ($\RR_t$, $\pi$) is defined as $
V^{\mathbb{V}(\beta)}_{\pi}(s_t):=\frac{\beta}{2} \EE_\pi[\left<\RR_t^{chaos}\right>|s_t]
$. By martingale property of $\RR_t^{chaos}$, the chaotic variance is equal to the variance of $\RR_t^{chaos}$ (scaled by $\frac{\beta}{2}$, conditional on $s_t$).
\end{definition}

\section{Reinforcement Learning: Risk-Sensitive Chaotic algorithms}
\label{rlchaos}
In this section we assume for simplicity that the state and action spaces are finite. Given the conceptual tools introduced in section \ref{chaotic}, and given a fixed initial state $s_0$ (generalization to the case of distributions over initial states is straightforward), we are interested in solving so-called chaotic problems of type:
\begin{equation}
\label{eqobj}
\max_{\pi} \EE_\pi[\RR_0|s_0]-\CC_{\bm{\rho^\pi}}[\RR_0](s_0)
\end{equation}

The extension to solving problems of type "$\max_{\pi} \EE_\pi[\RR_0|s_0]$ subject to: $\CC_{\bm{\rho^\pi}}[\RR_0](s_0) \leq \lambda$" is not discussed here but can be done using similar techniques developed in \cite{Prashanth2018-ki}, \cite{Tamar2012-ki} or \cite{Prashanth2016-fz}.

\subsection{Bellman equations: Chaotic Mean-Variance case}
\label{secbel}
We present below the Bellman equation in the chaotic mean-variance case $\CC_{\bm{\rho^\pi}}[\RR_0](s_0)=V^{\mathbb{V}(\beta)}_{\pi}(s_0)$, cf. definition \ref{mvdef}. It will be used to formulate a chaotic mean-variance version of Q-Learning in the episodic case (section \ref{secql}), as well as to study actor-critic algorithms and the average reward case (cf. supplementary). The proof of theorem \ref{bel} follows directly from the definition of $\left<\RR_{t}^{chaos}\right>$ in proposition \ref{cve}. 

\begin{theorem}
\label{bel}
(Bellman equation) Let $Q^{\mathbb{V}(\beta)}_{\pi}(s_t,a_t)$ the Q-function associated to $V^{\mathbb{V}(\beta)}_{\pi}(s_t)$ of definition \ref{mvdef}. Then we have the following Bellman equation:
$$
Q^{\mathbb{V}(\beta)}_{\pi}(s_t,a_t)=\EE [ \frac{\beta}{2}(R_{t+1}-\overline{R}(s_{t},a_{t}))^2
+\gamma^2 V^{\mathbb{V}(\beta)}_{\pi}(s_{t+1})|s_t,a_t]
$$
Consequently, if $R^\beta_{t+1}:=R_{t+1}-\frac{\beta}{2} (R_{t+1}-\overline{R}(s_{t},a_{t}))^2$ and $Q_{\pi}(s_t,a_t):=\EE_\pi[\RR_{t}|s_t,a_t]$ with associated value function $V_{\pi}$, then $Q^\beta_{\pi}(s_t,a_t):=Q_{\pi}(s_t,a_t)-Q^{\mathbb{V}(\beta)}_{\pi}(s_t,a_t)$ and its associated value function $V^\beta_\pi$ satisfy the following Bellman equation in the episodic case with $\gamma=1$:
$$
Q^\beta_{\pi}(s_t,a_t)=\EE [ R^\beta_{t+1}
+ V^\beta_{\pi}(s_{t+1})|s_t,a_t]
$$
\end{theorem}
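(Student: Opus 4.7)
The plan is to unfold the definition of the chaotic variance and recognize it as the value function of an auxiliary MDP whose per-step cost is the conditional variance of the reward, then apply the standard Bellman one-step argument.

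First, combining Definition \ref{mvdef} with the formula for $\langle \RR_t^{chaos}\rangle$ from Proposition \ref{cve}, I would write
$$
V^{\mathbb{V}(\beta)}_\pi(s_t)=\frac{\beta}{2}\EE_\pi\!\left[\sum_{t'=t}^\infty \gamma^{2(t'-t)}\,c(s_{t'},a_{t'})\,\Big|\,s_t\right],\qquad c(s,a):=\EE[(R_{t+1}-\overline R(s,a))^2\mid s_t=s,a_t=a].
$$
The Q-function version (conditioning on $(s_t,a_t)$ instead of $s_t$) is the same sum conditioned on the extra information. Splitting off the $t'=t$ term and re-indexing the remainder with $t''=t'-1$ produces
$$
Q^{\mathbb{V}(\beta)}_\pi(s_t,a_t)=\tfrac{\beta}{2}\,c(s_t,a_t)+\gamma^{2}\,\EE\!\left[\tfrac{\beta}{2}\EE_\pi\!\left[\sum_{t''=t+1}^\infty\gamma^{2(t''-(t+1))}c(s_{t''},a_{t''})\,\Big|\,s_{t+1}\right]\,\Big|\,s_t,a_t\right].
$$
The inner bracket is exactly $V^{\mathbb{V}(\beta)}_\pi(s_{t+1})$, and $\tfrac{\beta}{2}c(s_t,a_t)=\EE[\tfrac{\beta}{2}(R_{t+1}-\overline R(s_t,a_t))^2\mid s_t,a_t]$ by definition of $c$; pulling both terms under a single conditional expectation gives the first Bellman equation. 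The only small technical point to justify here is the interchange of the infinite sum and the expectation, which is immediate under Assumption \ref{a1} (rewards uniformly bounded, $\gamma^2<1$ in the discounted case, and almost-sure absorption with zero reward in the episodic case with $\gamma=1$), plus one use of the tower property to rewrite $\EE_\pi[\,\cdot\mid s_t,a_t]$ as an outer expectation over $s_{t+1}$ of an inner expectation conditional on $s_{t+1}$.

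For the second assertion, I would set $\gamma=1$, combine the first Bellman equation with the classical Bellman equation $Q_\pi(s_t,a_t)=\EE[R_{t+1}+V_\pi(s_{t+1})\mid s_t,a_t]$, and subtract. Linearity of conditional expectation and the definition $R^\beta_{t+1}:=R_{t+1}-\tfrac{\beta}{2}(R_{t+1}-\overline R(s_t,a_t))^2$ together with $V^\beta_\pi:=V_\pi-V^{\mathbb{V}(\beta)}_\pi$ immediately yield
$$
Q^\beta_\pi(s_t,a_t)=\EE[R^\beta_{t+1}+V^\beta_\pi(s_{t+1})\mid s_t,a_t].
$$
I do not foresee any real obstacle in this argument; the only place where one has to be a little careful is verifying that the rewritten inner sum, after shifting the index and pulling out $\gamma^2$, genuinely matches $V^{\mathbb{V}(\beta)}_\pi(s_{t+1})$ — this is essentially the Markov property applied to the auxiliary cost MDP with costs $c(\cdot,\cdot)$ and discount $\gamma^2$, which holds because the conditional distribution of $(s_{t+1},a_{t+1},s_{t+2},\dots)$ given $(s_t,a_t)$ under policy $\pi$ depends on $(s_t,a_t)$ only through $s_{t+1}$ once we average $s_{t+1}$ out.
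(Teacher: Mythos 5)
Your proposal is correct and matches the paper's own argument: the paper likewise obtains the first Bellman equation directly from the expression $\left<\RR_t^{chaos}\right>=\sum_{t'=t}^\infty \gamma^{2(t'-t)}\EE[(R_{t'+1}-\overline{R}(s_{t'},a_{t'}))^2|s_{t'},a_{t'}]$ in Proposition \ref{cve} (peel off the $t'=t$ term, tower property, Markov property), and the supplementary derives the second equation exactly as you do, by subtracting the classical Bellman equation for $Q_\pi$ with $\gamma=1$. Your added remarks on interchanging sum and expectation under Assumption \ref{a1} are a harmless elaboration of what the paper leaves implicit.
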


In particular, in the episodic case with $\gamma=1$, theorem \ref{bel} yields that the class of optimal policies associated to (\ref{eqobj}) coincides with that associated to the modified rewards $R^\beta$. In contrast to traditional TD-based methods such as $Q$-Learning, $R^\beta$ is not directly observable as it involves the term $\overline{R}(s_{t},a_{t})$ which will need to be estimated over the course of the algorithm.

\subsection{Chaotic Mean-Variance Q-Learning}
\label{secql}
In the episodic mean-variance case (with $\gamma=1$), theorem \ref{bel} allows us to derive a chaotic version of Q-learning, cf. algorithm \ref{cmvq} which is based on theorem \ref{cmvql}. In the discounted reward setting, it is not possible to combine $Q_{\pi}$ and $Q^{\mathbb{V}(\beta)}_{\pi}$ (where $Q_{\pi}(s_t,a_t):=\EE_\pi[\RR_{t}|s_t,a_t]$), hence we cannot get a Q-Learning type algorithm. The convergence proof is discussed in the supplementary and consists of a minor modification of the proof of \cite{Dayan1992-qg} based on the fact that $\overline{R}_t(s,a) \to \overline{R}(s,a)$ as $t \to +\infty$ with probability 1 for every state-action pair $(s,a)$, where $\overline{R}_t(s,a)$ is defined in theorem \ref{cmvql}. The average reward version of the algorithm is presented in the supplementary.

\begin{theorem}
\label{cmvql}
\textit{(Chaotic Mean-Variance Q-Learning in the episodic case)}.
Using theorem \ref{bel}, denote $Q_{\pi}(s_t,a_t):=\EE_\pi[\RR_{t}|s_t,a_t]$ and $Q^\beta_{\pi}(s_t,a_t):=Q_{\pi}(s_t,a_t)-Q^{\mathbb{V}(\beta)}_{\pi}(s_t,a_t)$. Let $(s_t)$, $(a_t)$ and $(R_{t+1})$ the successive states, actions and rewards observed by the agent. Let $(\alpha_t)$ a sequence of learning rates satisfying the usual conditions for every state-action pair $(s,a)$:
$$
\sum_{k=1}^\infty \alpha_{n_k(s,a)}=+\infty, \hspace{4mm}
\sum_{k=1}^\infty \alpha^2_{n_k(s,a)}<+\infty
$$
where $n_k(s,a)$ is the index corresponding to the $k^{th}$ visit to $(s,a)$. Further define the following iterates if $s_t=s$ and $a_t=a$ :
$$
N_t(s,a)=N_{t-1}(s,a)+1
$$
$$
\overline{R}_t(s,a)=\overline{R}_{t-1}(s,a)+\frac{1}{N_t(s,a)}(R_{t+1}-\overline{R}_{t-1}(s,a))
$$
$$
Q^\beta_{t}(s,a)=(1-\alpha_t)Q^\beta_{t-1}(s,a)
+\alpha_t(R_{t+1}-\frac{1}{2}\beta(R_{t+1}-\overline{R}_t(s,a))^2
+\max_{a'} Q^\beta_{t-1}(s_{t+1},a'))
$$
and $Q^\beta_{t}(s,a)=Q^\beta_{t-1}(s,a)$, $N_{t}(s,a)=N_{t-1}(s,a)$, $\overline{R}_{t}(s,a)=\overline{R}_{t-1}(s,a)$ otherwise. Then $Q^\beta_t(s,a) \to Q^{\beta}_*(s,a)$ as $t \to +\infty$ with probability 1 for every state-action pair $(s,a)$, where $ Q^{\beta}_*(s,a):=\sup_\pi Q^\beta_{\pi}(s,a)$.
\end{theorem}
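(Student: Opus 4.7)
The plan is to reduce the coupled recursion to the standard Q-learning iterate on the modified reward $R^\beta_{t+1} := R_{t+1} - \frac{\beta}{2}(R_{t+1} - \overline{R}(s_t,a_t))^2$, for which Theorem \ref{bel} identifies $Q^\beta_*$ as the optimal action-value function, plus a vanishing perturbation arising from using the empirical estimate $\overline{R}_t$ in place of $\overline{R}$. Convergence of the idealized iterate is then exactly the Watkins-Dayan theorem \cite{Dayan1992-qg}, so the substance of the proof lies in absorbing the perturbation.

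First I would establish the hint explicitly stated in the theorem: $\overline{R}_t(s,a) \to \overline{R}(s,a)$ a.s. for every $(s,a)$. The learning-rate hypothesis implicitly requires $n_k(s,a)$ to be defined for all $k$, i.e. each pair is visited infinitely often, so $N_t(s,a) \to \infty$ a.s. By Assumption \ref{a1} and the Markov structure, $(R_{t+1} - \overline{R}(s_t,a_t))_t$ is a bounded $\FF_{t+1}$-martingale-difference sequence, and the partial sums along visits to $(s,a)$ form a bounded-increment martingale. The martingale strong law then yields $\overline{R}_t(s,a) = \overline{R}(s,a) + o(1)$ a.s.

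Second I would rewrite the $Q^\beta_t$-update target as $R^\beta_{t+1} + \eta_t$ with
\[
\eta_t = \tfrac{\beta}{2}\bigl(2R_{t+1} - \overline{R}(s,a) - \overline{R}_t(s,a)\bigr)\bigl(\overline{R}_t(s,a) - \overline{R}(s,a)\bigr).
\]
The first factor is uniformly bounded by Assumption \ref{a1} and the second factor vanishes a.s. by the previous step, so $\eta_t \to 0$ a.s. The iterate thus becomes standard Q-learning on the bounded reward $R^\beta$ plus an $\alpha_t$-weighted perturbation $\alpha_t \eta_t$. By Theorem \ref{bel}, the unperturbed iterate's fixed point is precisely $Q^\beta_*(s,a) = \sup_\pi Q^\beta_\pi(s,a)$, and Watkins-Dayan gives a.s. convergence to it under the Robbins-Monro step sizes and infinite visitation. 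To propagate convergence through the perturbation, I would invoke a standard stochastic-approximation extension (e.g. the Robbins-Siegmund lemma, or Jaakkola-Jordan-Singh): when the perturbation is bounded and vanishes pathwise, it does not alter the limit of a contractive stochastic approximation with summable-in-square step sizes.

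The hard part will be justifying this last perturbation step fully rigorously because $\overline{R}_t$ and $Q^\beta_t$ evolve as a coupled system. The cleanest route is a two-timescale interpretation: $\overline{R}_t$ effectively uses the step size $1/N_t(s,a)$ on the visit subsequence and converges on its own without any feedback from $Q^\beta_t$; conditioning on the event $\{\sup_{t \geq T} |\overline{R}_t(s,a) - \overline{R}(s,a)| \leq \epsilon\}$ (which has probability approaching $1$ as $T \to \infty$), the $Q^\beta_t$-iterate is an $O(\epsilon)$-perturbation of the Dayan-Watkins iterate on $R^\beta$, and letting $\epsilon \to 0$ along this nested family of events closes the proof.
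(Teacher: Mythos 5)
Your proposal is sound and follows the same overall strategy as the paper: use theorem \ref{bel} to identify the iterate as standard Q-learning on the modified reward $R^\beta$, prove $\overline{R}_t(s,a)\to\overline{R}(s,a)$ a.s.\ from infinite visitation (the paper invokes the strong law of large numbers; your martingale SLLN along visit times is an equally valid route, since successive rewards at visits to $(s,a)$ are conditionally i.i.d.\ under assumption \ref{a1}), and then absorb the resulting vanishing error. Where you differ is in how the error is absorbed: the paper does not use a generic perturbed stochastic-approximation lemma but instead opens up the Watkins--Dayan action-replay proof \cite{Dayan1992-qg} and observes that its step B.3 (``rewards and transition probabilities converge with probability 1'') already tolerates reward estimates that converge a.s., so substituting $\overline{R}_t$ for $\overline{R}$ is a minor modification of that single step; your route instead wraps the iterate as Q-learning on $R^\beta$ plus a bounded pathwise-vanishing perturbation $\eta_t$ and closes with a Jaakkola--Jordan--Singh/Robbins--Siegmund argument. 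Both work, and your decomposition of $\eta_t$ is a nice explicit quantification the paper leaves implicit. Two remarks: first, your worry about the ``coupled system'' is unnecessary and your own resolution is exactly the paper's remark \ref{rqsa} --- the $\overline{R}_t$ recursion receives no feedback from $Q^\beta_t$, so no two-timescale machinery is needed for convergence; second, be careful with the phrase ``contractive stochastic approximation'': the theorem is stated in the episodic case with $\gamma=1$, where the Bellman operator is not a sup-norm contraction, so your perturbation step needs either the undiscounted/absorbing-state extension (weighted-norm or stochastic-shortest-path versions of the convergence theorem) or, as the paper does, an appeal to the discussion in \cite{Dayan1992-qg} of how their discounted proof extends to the episodic $\gamma=1$ setting.
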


\begin{remark}
\label{rqsa}
Note that in theorem \ref{cmvql}, one could use a two-timescale stochastic approximation algorithm by using a specific timescale for the $\overline{R}_t$ process (instead of $\frac{1}{N_t(s,a)}$). Since the update rule of $\overline{R}_t$ doesn't depend on $Q^\beta_{t}(s,a)$ (uncoupled case), this is not required for convergence but could be used to improve the rate of convergence, cf. a remark in \cite{aapkt}, p. 4.
\end{remark}

\begin{algorithm}[ht]
\caption{\textbf{CMV-Q-Learning (episodic case)}}\label{cmvq}
\textbf{Input:} {$Q^\beta$-table initialized arbitrarily, learning rate $(\alpha_t)_{t \geq 0}$, $\overline{R}(s,a)$ and $N(s,a)$ initialized to 0.}\\
\textbf{Output:} {optimal policy $\pi_*(s)=\mbox{argmax}_a Q^\beta_*(s,a)$}
\begin{algorithmic}[1]
    \For{each episode}
    \State{initialize $s_t=s_0$}
    \While{$s_t$ is not terminal}
	    \State{Choose $a_t$ from $s_t$ using a policy derived from $Q^\beta$ (e.g. $\epsilon$-greedy).}
	    \State{Take action $a_t$, observe $s_{t+1}$, $R_{t+1}$.}
	    \State{$N(s_t,a_t) \gets N(s_t,a_t)+1$; $\overline{R}(s_t,a_t) \gets  \overline{R}(s_t,a_t)+\frac{1}{N(s_t,a_t)}(R_{t+1}-\overline{R}(s_t,a_t))$}
        \State{$Q^\beta(s_t,a_t) \gets (1-\alpha_t) Q^\beta(s_{t},a_t) + \alpha_t( R_{t+1}-\frac{\beta}{2}(R_{t+1}-\overline{R}(s_t,a_t))^2+\max_a Q^\beta(s_{t+1},a))$}
        \State{$s_t \gets s_{t+1}$}
        \EndWhile
        \EndFor
\end{algorithmic}
\end{algorithm}

\subsection{Monte Carlo Policy gradient algorithms - Episodic case}

In this section we consider episodic Monte Carlo based algorithms that start with a parametric form $\pi_\theta$ for the policy and optimize equation (\ref{eqobj}) in the direction of the gradient with respect to $\theta$, hence aiming for local optima only. We make the following classical assumption in this section (\cite{Bhatnagar2009-ie}):  \begin{assumption}
\label{a2}
For every policy $\pi_\theta$, the Markov chain induced by $P$ and $\pi_\theta$ is ergodic, i.e. irreducible, aperiodic and positive recurrent. Further, for every state-action pair $(s,a)$, $\pi_\theta(s,a)$ is continuously differentiable in $\theta$.
\end{assumption}
From equation (\ref{eqobj}), we need to compute unbiased estimates of $\nabla_\theta \EE_{\pi_\theta}[\RR_0|s_0]$ and $\nabla_\theta\CC_{\bm{\rho^{{\pi_\theta}}}}[\RR_0](s_0)$. The former is the classical expected reward gradient. By definition \ref{defcv}, $\CC_{\bm{\rho^{{\pi_\theta}}}}[\RR_0](s_0)$ consists in applying a risk measure to a mean zero martingale, which is usually simpler as we will see below in the mean-variance case $\CC_{\bm{\rho^{{\pi_\theta}}}}[\RR_0](s_0)=V^{\mathbb{V}(\beta)}_{\pi_\theta}(s_0)$ (cf. definition \ref{mvdef}), and more importantly the work done in the literature for specific risk measures can be applied straightforwardly, e.g. \cite{Chow2018-eq} or \cite{Tamar_A_Glassner_Y_Manor_S2015-zn} in the case of CVaR (however the risk measure in our case is applied to the chaotic reward process $\RR_t^{chaos}$ only). From now on we focus on the mean-variance case of definition \ref{mvdef}, but we present in the supplementary extensions to chaotic CVaR and Sharpe ratio. 

Provided we know $\overline{R}(s,a)$, the gradient $\nabla_\theta V^{\mathbb{V}(\beta)}_{\pi_\theta}(s_0)$ presents no specific difficulty and can be computed using the classical likelihood ratio technique by generating $B \geq 1$ episodes $s_0^{(b)}$, $a_0^{(b)}$, $R_1^{(b)}$, ..., $s_{T_b-1}^{(b)}$, $a_{T_b-1}^{(b)}$, $R_{T_b}^{(b)}$, $b=1..B$, following $\pi_\theta$ and computing the unbiased Monte Carlo estimate:
$$
\nabla_\theta V_{\pi_\theta}^{\mathbb{V}(\beta)}(s_0)=\frac{\beta}{2}\frac{1}{B} \sum_{b=1}^B \sum_{t'=0}^{T_b-1} \nabla \ln \pi_\theta(a^{(b)}_{t'}|s^{(b)}_{t'}) V_{b,t'}
$$
$$
V_{b,t'}:=\sum_{t=t'}^{T_b-1} \gamma^{2(t-t')}(R^{(b)}_{t+1}-\overline{R}(s^{(b)}_t,a^{(b)}_t))^2
$$

The subtlety here is that we need to learn $\overline{R}(s,a)$ in the course of the algorithm. In the tabular case, using assumption \ref{a2}, we are guaranteed that every state-action pair $(s,a)$ will be visited infinitely often and we can proceed as in the Q-Learning case (theorem \ref{cmvql}) by updating the (conditional) average estimate $\widehat{R}(s,a)$ as iterations on $\theta$ are performed. By the strong law of large numbers, convergence of $\widehat{R}(s,a)$ to $\overline{R}(s,a)$ occurs with probability 1. Remark \ref{rqsa} still holds here: since the iteration on the sample average $\widehat{R}(s,a)$ is uncoupled from the one on $\theta$, we do not need a two-timescale algorithm to guarantee convergence. For each $b=1..B$ and $t=0..T_b$, if $s=s_t^{(b)}$ and $a=a_t^{(b)}$ we perform the updates:
\begin{equation}
\begin{aligned}
\label{eqe1}
&N(s,a)=N(s,a)+1\\
&\widehat{R}(s,a)=\widehat{R}(s,a)+\frac{1}{N(s,a)}(R^{(b)}_{t+1}-\widehat{R}(s,a))
\end{aligned}
\end{equation}
Alternatively, if the state or action spaces are large, one may want to use a parametric approximation $\widehat{R}_\psi(s,a)$ of $\overline{R}(s,a)$ that will be updated in the course of the algorithm: this approach is presented in algorithm \ref{eqe2}, where we use an experience replay table and fit $\widehat{R}_\psi$ using SGD. Proposition \ref{bias} quantifies the related gradient bias due to the use of this approximation (proof in supplementary). We present in algorithm \ref{cmvr} the Chaotic Mean-Variance version of REINFORCE, which uses either equation (\ref{eqe1}) or algorithm \ref{eqe2}. 

\begin{proposition}
\label{bias}
Let $\epsilon_\psi(s,a):=\widehat{R}_\psi(s,a)-\overline{R}(s,a)$ and $\nabla_\theta V^{\mathbb{V}(\beta)}_{\pi_\theta,\psi}$ the gradient obtained using the approximation $\widehat{R}_\psi$. The gradient bias $B_{\psi}^\theta(s_0)$$:=\nabla_\theta V^{\mathbb{V}(\beta)}_{\pi_\theta,\psi}(s_0)-\nabla_\theta V^{\mathbb{V}(\beta)}_{\pi_\theta}(s_0)$ satisfies:
$$
B_{\psi}^\theta(s_0)=\frac{\beta}{2}\EE_{(S,A)\sim d^\theta_{\gamma^2}(s_0)}[\nabla \ln \pi_\theta(A|S) b^\psi_{\theta}(S,A)]
$$
$$
b^\psi_{\theta}(s,a):=\EE_{\pi_\theta}[\sum_{t=0}^\infty \gamma^{2t} \epsilon_\psi^2(s_t,a_t)|s_0=s,a_0=a]
$$
where $d^\theta_{\gamma^2}(s_0,s,a):=\pi_\theta(a|s)\sum_{t=0}^{\infty}\gamma^{2t} \mathbb{P}_{\pi_\theta}[s_t=s|s_0]$ is the action-state $\gamma^2$-discounted visiting distribution.
\end{proposition}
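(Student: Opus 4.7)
The plan is to compute the bias by exploiting the tower property to eliminate the cross term that appears when expanding the squared residual, and then to recognize what remains as a standard discounted policy gradient with effective discount $\gamma^2$ and reward $\epsilon_\psi^2$.

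First, I would write both objectives as expected discounted sums. By the tower property and the fact that $\overline{R}(s_t,a_t)=\EE[R_{t+1}|s_t,a_t]$,
\begin{equation*}
V^{\mathbb{V}(\beta)}_{\pi_\theta}(s_0)=\frac{\beta}{2}\EE_{\pi_\theta}\Bigl[\sum_{t=0}^{\infty}\gamma^{2t}\bigl(R_{t+1}-\overline{R}(s_t,a_t)\bigr)^2\,\Big|\,s_0\Bigr],
\end{equation*}
and the analogous Monte Carlo object with $\widehat{R}_\psi$ replacing $\overline{R}$ is $V^{\mathbb{V}(\beta)}_{\pi_\theta,\psi}(s_0)$. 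Expanding $(R_{t+1}-\widehat{R}_\psi)^2=(R_{t+1}-\overline{R})^2+2(R_{t+1}-\overline{R})(\overline{R}-\widehat{R}_\psi)+\epsilon_\psi^2(s_t,a_t)$ and conditioning on $(s_t,a_t)$, the cross term vanishes because $\EE[R_{t+1}-\overline{R}(s_t,a_t)\mid s_t,a_t]=0$. Hence
\begin{equation*}
V^{\mathbb{V}(\beta)}_{\pi_\theta,\psi}(s_0)-V^{\mathbb{V}(\beta)}_{\pi_\theta}(s_0)=\frac{\beta}{2}\EE_{\pi_\theta}\Bigl[\sum_{t=0}^{\infty}\gamma^{2t}\epsilon_\psi^{2}(s_t,a_t)\,\Big|\,s_0\Bigr].
\end{equation*}

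Second, I would take $\nabla_\theta$ of this identity. The crucial point is that $\widehat{R}_\psi$, and therefore $\epsilon_\psi$, does not depend on $\theta$, so the only $\theta$-dependence on the right-hand side is through $\PP_{\pi_\theta}$. Assumption \ref{a1} (bounded rewards, hence bounded $\epsilon_\psi$ since $\widehat{R}_\psi$ would in practice be bounded by projection, or at least square-integrable under $\PP_{\pi_\theta}$) together with assumption \ref{a2} (differentiability of $\pi_\theta$ in $\theta$) justify the interchange of $\nabla_\theta$ and the infinite sum/expectation by dominated convergence.

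Third, observe that the right-hand side is exactly the value function at $s_0$ of a \emph{derived} MDP with the same transition kernel and policy $\pi_\theta$, but with reward $\epsilon_\psi^2(s,a)$ and discount factor $\gamma^2$. Applying the standard (discounted) policy gradient theorem to this derived MDP yields
\begin{equation*}
\nabla_\theta\EE_{\pi_\theta}\Bigl[\sum_{t=0}^{\infty}\gamma^{2t}\epsilon_\psi^{2}(s_t,a_t)\,\Big|\,s_0\Bigr]
=\EE_{(S,A)\sim d^\theta_{\gamma^2}(s_0)}\bigl[\nabla\ln\pi_\theta(A|S)\,b_\theta^\psi(S,A)\bigr],
\end{equation*}
where $b_\theta^\psi(s,a)$ is the $Q$-function of the derived MDP and $d^\theta_{\gamma^2}(s_0,\cdot,\cdot)$ is the associated $\gamma^2$-discounted visiting distribution as defined in the statement. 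Multiplying by $\beta/2$ gives the claimed formula for $B_\psi^\theta(s_0)$.

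The only non-mechanical step is invoking the discounted policy gradient theorem with effective discount $\gamma^2$ (rather than $\gamma$), but this is routine: one either re-derives it by differentiating the Bellman recursion $V(s)=\sum_a\pi_\theta(a|s)[\epsilon_\psi^2(s,a)+\gamma^2\sum_{s'}P(s,a,s')V(s')]$ and unrolling, or cites a general statement and specializes. The main subtlety to watch is the integrability/dominated-convergence justification for swapping $\nabla_\theta$ with the infinite sum; this is where assumption \ref{a1} plus a mild boundedness assumption on $\widehat{R}_\psi$ are used.
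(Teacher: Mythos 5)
Your proposal is correct and follows essentially the same route as the paper: the same cross-term cancellation via $\EE[R_{t+1}\mid s_t,a_t]=\overline{R}(s_t,a_t)$ reduces the discrepancy to the discounted sum of $\epsilon_\psi^2(s_t,a_t)$, and the same policy gradient identity with effective discount $\gamma^2$ (proved in the paper by differentiating and unrolling the Bellman recursion, exactly as you suggest) yields the stated expression. The only difference is cosmetic ordering — you subtract the two value functions before applying the policy gradient theorem once to the derived MDP with reward $\epsilon_\psi^2$, whereas the paper applies the gradient identity to each value function and then computes the difference of the corresponding $Q$-functions.
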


\begin{algorithm}[ht]
\caption{\textbf{Distributional Update}}\label{eqe2}
\textbf{Input:} {initial distributional parameter $\psi$, experience replay table $\EEE$, number of distributional gradient steps $N_\psi$, number of SGD samples  $M_\psi$.\\
\textbf{Output:} {Approximation of the optimal distributional parameter $\psi^*$}
\begin{algorithmic}[1]
\State{Draw $M_\psi$ samples $(s_j,a_j,R_j)$ randomly from $\EEE$.}
\State{Set $\widetilde{M}_\psi$ to be the number of unique pairs $(\widetilde{s}_j,\widetilde{a}_j)$ and for each such pair, set $\widetilde{R}_j$ to be the average of the corresponding $R_j$.}
\State{Perform $N_\psi$ steps of SGD on the loss ($\widehat{R}_\psi(\widetilde{s}_j,\widetilde{a}_j)-\widetilde{R}_j)^2$ using the $\widetilde{M}_\psi$ samples.}
\end{algorithmic}
}
\end{algorithm}

\begin{algorithm}[ht]
\caption{\textbf{CMV-REINFORCE (episodic case)}}\label{cmvr}
\textbf{Input:} {Initial policy parameter $\theta_0$, learning rate $(\alpha_n)$, number of episodes per batch $B$, $\widehat{R}_\psi$ initialized to 0, Optional: experience replay table $\EEE$.}\\
\textbf{Output:} {Approximation of the optimal policy parameter $\theta^*$}
\begin{algorithmic}[1]
	\While{$\theta_n$ not converged}
	    \State{Generate $B$ episodes $s_0^{(b)}$, $a_0^{(b)}$, $R_1^{(b)}$, ..., $s_{T_b-1}^{(b)}$,
	    $a_{T_b-1}^{(b)}$, $R_{T_b}^{(b)}$, $b=1..B$, following $\pi_{\theta_n}(\cdot|\cdot)$}
	    \State{\textbf{In tabular case}: use episodes to update $\widehat{R}_\psi$ with eq. (\ref{eqe1}); \textbf{else} increment $\EEE$ with the tuples ($s_{t}^{(b)}$,
	    $a_{t}^{(b)}$, $R_{t}^{(b)}$) and update $\widehat{R}_\psi$ using algorithm \ref{eqe2}.}
	    \For{$b = 1$ to $B$}
	    \State{$v_{t,b}:=\sum_{t'=t}^{T_b-1} \gamma^{t'-t} R^{(b)}_{t'+1}$$-\frac{\beta}{2}\gamma^{2(t'-t)}(R^{(b)}_{t'+1} - \widehat{R}_\psi(s^{(b)}_{t'},a^{(b)}_{t'}))^2$}
        \State{$V_b \gets \sum_{t=0}^{T_b-1} \nabla \ln \pi_{\theta_n}(a^{(b)}_{t}|s^{(b)}_{t})v_{t,b}$}
	    \EndFor
        \State{$\theta_{n+1} \gets \theta_n+\alpha_n B^{-1}\sum_{b=1}^B V_b$}
    \EndWhile
\end{algorithmic}
\end{algorithm}

\subsection{Policy Gradient Actor-Critic algorithms}

In order to derive Actor-Critic algorithms, the Bellman equations in theorem \ref{bel} can be used - as in the CMV-Q-Learning algorithm \ref{cmvq} - in order to adapt the work of e.g. \cite{Prashanth2016-fz} (variance case) or \cite{Chow2018-eq} (CVaR case) in order to derive chaotic versions of their actor-critic algorithms. This extension is discussed in the supplementary.

\section{Experiments: Chaotic Mean-Variance}
\label{appl}
\subsection{Grid World}
\label{apgs}
We consider the episodic problem of a robot on a grid aiming at a goal. The state space consists of the 16 grid squares, and the action space consists in choosing to go East, West, North or South. Reaching the goal (resp. taking a step) gives a +1 (resp. -1) reward and negative rewards are positioned on the grid as seen on figure \ref{fig-robot1}. When an action $a_t$ is chosen, there is a probability $p_{error}=50\%$ that the robot goes in a random direction. If the robot hits the extremity of the grid, it stays where it is. We train policies using the CMV-Q-Learning algorithm \ref{cmvq}. In figure \ref{fig-robot1} we plot the path heatmap over $10^5$ rollout steps performed with the learned policy for various risk aversion coefficients $\beta$ (cf. details and additional experiments in supplementary). The higher $\beta$, the further away from the -20 reward the robot goes, as expected, as it prefers taking the -6 loss rather than walking next to the -20 reward and risking to encur the corresponding loss. 

The second element that our algorithm gives is a \textit{risk heatmap, highlighting the most uncertain states}, i.e. the states which yield the highest reward stochasticity/uncertainty (figure \ref{fig-robot1}). We perform rollouts with the learned $\overline{R}(\cdot,\cdot)$ and policy $\pi^*_\beta$ and compute $\EE_{\pi^*_\beta}[(R_{t+1}-\overline{R}(s,\pi^*_\beta(s)))^2|s_t=s]$ for every state $s$. In figure \ref{fig-robot1} we see that as expected, the heatmap highlights the states next to the -20 reward, and to a lesser intensity the states around the -6 rewards. To the best of our knowledge, only two other work study risk-sensitive Q-Learning algorithms: \cite{Mihatsch2002-ck} and \cite{Shen2013-ds}. Both transform the TD-increments by a non-linear function in order to obtain risk-sensitive behaviors. The former show that their algorithm converge to the worst-case optimality criterion as the risk parameter changes, hence in our specific example, we could obtain similar paths as in figure \ref{fig-robot1}, but our algorithm, in addition to being the only one to focus on reward stochasticity only, additionally provides the risk heatmap discussed above, which quantifies the extent to which a given state yields uncertain rewards.

\begin{figure}[ht]
\vskip 0.1in
\begin{center} 
\centerline{\includegraphics[scale=0.5]{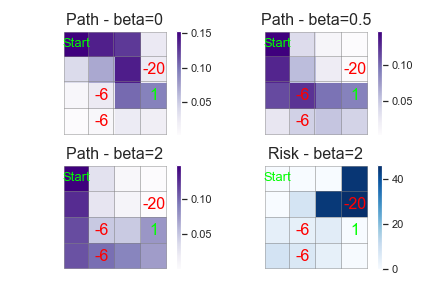}}
\caption{Path and risk heatmaps over $10^5$ rollout steps with the learned policy, $p_{error}=50\%$ (cf. description in text).}
\label{fig-robot1}
\end{center}
\vskip -0.1in
\end{figure}

\subsection{Portfolio optimization}
\label{apstock}
We consider the problem of investing in 2 financial assets. One is risk-free in the sense that investing a quantity $q^{RF}_{t}$ in the asset yields a deterministic reward $R^{RF}_{t+1}:=q^{RF}_{t}\mu(s_t)$, where $\mu$ is the \textit{risk-free rate}. The latter can be seen as the overnight rate from day $t$ to day $t+1$, which is known at the end of day $t$ when $q^{RF}_{t}$ is chosen. The other asset is risky (e.g. a stock) and yields an uncertain reward for a quantity $q^{R}_{t}$ of $R^{R}_{t+1}:=q^{R}_{t}(\mu(s_t)+\sigma(s_t)h_{t+1})$, where $(h_{t})$ are i.i.d. standard normal and $\sigma$ is referred to as \textit{volatility}. We restrict $q^{RF}_{t},q^{R}_{t}$ to be nonnegative integers which sum is less than a total investment constraint $q_{max}$. The action is defined as $a_t:=(q^{RF}_{t},q^{R}_{t})$ and the reward $R_{t+1}:=R^{RF}_{t+1}+R^{R}_{t+1}$. In our experiments we take $q_{max}=5$, yielding $21$ possible actions. The state space consists of 3 states \textit{LowVol}, \textit{HighVol} and \textit{MediumVol} defined as low volatility $\sigma$ (and low risk-free rate $\mu$), high volatility (and high risk-free rate $\mu$), as well as an intermediate state. The state transition matrix is designed such that the more we trade in the risky asset (i.e. the higher $q^{R}_{t}$), the more likely we are to reach a higher volatility state. We train the policy using CMV-REINFORCE algorithm \ref{cmvr} and its baseline (classical mean-variance), and use it to compute performance metrics over rollout episodes of $T$ timesteps. Details on training/numerical values used are presented in the supplementary.

\begin{figure}[ht]
\vskip 0.1in
\begin{center} 
\centerline{\includegraphics[scale=0.4]{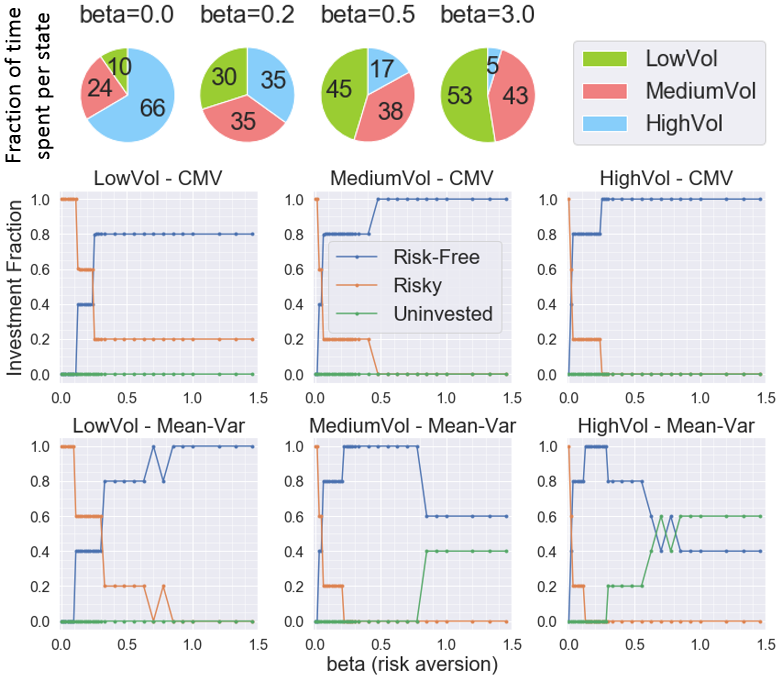}}
\caption{\textbf{(Top)} Fraction of time spent per state \textbf{(Mid/Bottom)} CMV and baseline (mean-variance) asset investment fraction per state as a function of $\beta$ - $T=20$ timesteps, $s_0=\mbox{LowVol}$.}
\label{fig-cmv}
\end{center}
\vskip -0.1in
\end{figure}

We display in figure \ref{fig-cmv} the fraction of time the process has spent in each state, as well as the investment fraction in each asset (for rollout episodes). With $\beta=0$ and by design of the state transition matrix, both assets give the same expected reward $\mu$ but the policy is incentivized to trade in the risky asset rather than in the risk-free asset in order to reach the HighVol state, which gives the highest $\mu$. As $\beta$ increases, in the CMV case, the policy shifts towards trajectories which contain less reward uncertainty, which means investing in the risk-free asset which associated rewards are deterministic. The classical mean-variance case (baseline) penalizes not only the variability related to reward uncertainty but also the variability related to the switching of states appearing in both assets via the deterministic term $\mu(s_t)$, hence as $\beta$ increases, the policy is incentivized to stop investing (cf. green line in figure \ref{fig-cmv} representing the uninvested budget $q_{max}-q^{RF}_{t}-q^{R}_{t}$), i.e. it stops taking advantage of the risk-free asset. The latter can also be seen in figure \ref{fig-r20} where we plot for rollout episodes the cumulative reward mean and standard deviation as a function of $\beta$. In the mean-variance case the reward mean will eventually vanish as the policy will stop trading in both assets: by trying to make the standard deviation lower, it generates a counterintuitive behavior in that it stops taking advantage of the risk-free asset.

\section{Conclusion}
We presented a novel, conceptually meaningful decomposition of the cumulative reward process based on the Doob decomposition that distinguishes between the different sources of randomness contained within it, introduced a new conceptual tool - the \textit{chaotic variation} - that exactly captures reward uncertainty risk, and incorporated it into model-free value-function based and policy gradient algorithms. Potential real-world applications include all settings where one is subject to uncertain/stochastic rewards and is interested in deriving interpretable risk-sensitive policies, for example recently studied RL financial market-making problems \cite{GM}, \cite{nvmm} where reward uncertainty plays a major role in that market-makers stream prices but do not know whether clients will decide to trade at that price, and further they are typically averse to uncertain fluctuations in the underlying financial asset price. Future work could include extending the framework to the case of delayed rewards.

\begin{figure}[t]
\vskip 0.1in
\begin{center} 
\centerline{\includegraphics[scale=0.4]{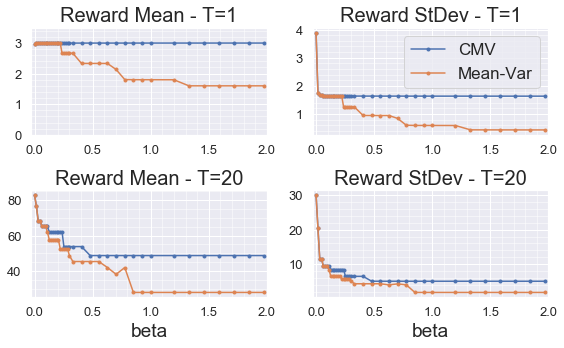}}
\caption{Cumulative reward Mean and Std.Dev. for CMV and baseline (mean-variance), as a function of $\beta$ - \textbf{(Top)} $T=1$, $s_0=\mbox{Random}$ and \textbf{(Bottom)} $T=20$ timesteps, $s_0=\mbox{LowVol}$. }
\label{fig-r20}
\end{center}
\vskip -0.1in
\end{figure}

\section*{Disclaimer}
\normalsize

This paper was prepared for information purposes by the Artificial Intelligence Research group of JPMorgan Chase \& Co and its affiliates (“JP Morgan”), and is not a product of the Research Department of JP Morgan. JP Morgan makes no representation and warranty whatsoever and disclaims all liability, for the completeness, accuracy or reliability of the information contained herein. This document is not intended as investment research or investment advice, or a recommendation, offer or solicitation for the purchase or sale of any security, financial instrument, financial product or service, or to be used in any way for evaluating the merits of participating in any transaction, and shall not constitute a solicitation under any jurisdiction or to any person, if such solicitation under such jurisdiction or to such person would be unlawful. \textsuperscript{\textcopyright} 2020 JPMorgan Chase \& Co. All rights reserved.

\small

\bibliography{neurips_2020}
\bibliographystyle{authordate1}

\normalsize

\appendix

\section{Background}
\label{secbac}
\subsection{Central limit theorem for Markov chain functionals (section 1 - Introduction)}
Assume for simplicity that the state space $\mathbb{S}$ is finite, and let $R_\pi(s_t,s_{t+1}):=R(s_t,\pi(s_t),s_{t+1})$ the reward obtained at time $t+1$ associated to the deterministic policy $\pi$, so that $R_\pi$ is a function of $s_t$ and $s_{t+1}$ only, and $(s_t)$ is a Markov chain satisfying $\PP[s_{t+1}=s'|s_t=s]=P(s,\pi(s),s')=:P_\pi(s,s')$. The central limit theorem for Markov chain functionals (\cite{nnv}, theorem 3.17) states that the limit in distribution as $n \to +\infty$ of:
$$
\sqrt{n} \left(\frac{1}{n}\sum_{t=0}^n R_\pi(s_t,s_{t+1})-\sum_{x,y \in \mathbb{S}}d_{\pi}(x)P_\pi(x,y)R_\pi(x,y)\right)
$$
is normal with mean zero and variance $\sigma_{deter.}^2+\sigma_{chaotic}^2$, where $d_\pi$ is the stationary distribution of the Markov chain of transition kernel $P_\pi$. Equivalently, the latter limit is equal in distribution to the sum of two independent normal random variables of respective variances $\sigma_{deter.}^2$ and $\sigma_{chaotic}^2$, which are given by the following expressions:
$$
\sigma_{chaotic}^2=\sum_{x \in \mathbb{S}}d_{\pi}(x)P_\pi^{var}R_{\pi}(x,\cdot)(x), \hspace{5mm}
\sigma_{deter.}^2=\sum_{x \in \mathbb{S}}d_{\pi}(x)P_\pi^{var}f_{Poisson}(x)
$$
Denoting the conditional expected reward $\mu_\pi(x):=\EE[R_{\pi}(s_t,s_{t+1})|s_t=x]=\sum_{y \in \mathbb{S}} R_{\pi}(x,y)P_\pi(x,y)$, we see below that $\sigma_{deter.}^2$ only depends on the rewards via the deterministic term $\mu_\pi(x)$, whereas $\sigma_{chaotic}^2$ additionally depends on the term $\EE[R_{\pi}(s_t,s_{t+1})^2|s_t=x]$ which quantifies reward stochasticity/uncertainty. Indeed, the variance operators $P_\pi^{var}$ act as follows:
$$
P_\pi^{var}R_{\pi}(x,\cdot)(x)=\sum_{y \in \mathbb{S}} R^2_{\pi}(x,y)P_\pi(x,y)
-\mu_\pi(x)^2
$$
and:
$$
P_\pi^{var}f_{Poisson}(x)=\sum_{y \in \mathbb{S}} f^2_{Poisson}(y)P_\pi(x,y)
-(\sum_{y \in \mathbb{S}} f_{Poisson}(y)P_\pi(x,y))^2
$$
where the function $f_{Poisson}$ is the solution of the Poisson equation:
$$
\sum_{y \in \mathbb{S}} f_{Poisson}(y)P_\pi(x,y)-f_{Poisson}(x)
=\sum_{x'\in \mathbb{S}}d_{\pi}(x')\mu_\pi(x')-\mu_\pi(x)
$$

\subsection{Doob decomposition of a stochastic process}
\label{bdoob}
Let $X$ a discrete-time process on a probability space $(\Omega,\FF,\PP)$ such that (i) $\EE[|X_n|]<\infty$ for all $n$, and (ii) $X$ is adapted to the filtration $\mathbb{F}:=(\FF_n)_{n \geq 0}$, i.e. $X_n$ is $\FF_n$-measurable for all $n$. Then, there exists an integrable martingale $M$, and an integrable predictable process $A$, such that $A_0=M_0=0$ and $X_n=X_0+A_n+M_n$ for all $n$. This decomposition is almost surely unique. Here, we remind that $A$ being predictable means that $A_n$ is $\FF_{n-1}$-measurable for all $n$ (i.e. it is known one timestep before), and $M$ martingale means that $M_n$ is $\FF_n$-measurable and $\EE[M_{n+1}-M_n|\FF_n]=0$. Further, $A$ and $M$ are given by:
$$
A_n=\sum_{k=1}^n \EE[X_k|\FF_{k-1}]-X_{k-1}, \hspace{5mm}
M_n=\sum_{k=1}^n X_k-\EE[X_k|\FF_{k-1}]
$$

\subsection{Conditional risk measures}

We remind here the definition of a conditional risk measure associated to a sigma-algebra $\mathcal{G}$, according to \cite{Detlefsen2005-fq}. Let $L^\infty$, $L^\infty_{\mathcal{G}}$ the set of resp. bounded random variables and bounded, $\mathcal{G}$-measurable random variables. A conditional risk measure $\rho$ associated to a sigma-algebra $\mathcal{G}$ is a map $L^\infty \to L^\infty_{\mathcal{G}}$ such that:
\begin{itemize}
    \item (Normalization) $\rho(0)=0$.
    \item (Conditional Translation Invariance) For any $X \in L^\infty$, $Z \in L^\infty_{\mathcal{G}}$ we have $\rho(X+Z)=\rho(X)-Z$.
    \item (Monotonicity) For any $X,Y \in L^\infty$, if $X \leq Y$ with probability 1, then $\rho(X)\geq \rho(Y)$ with probability 1.
\end{itemize}

\section{Proofs}
\label{secb}
\subsection{Proof of theorem 1}
Define the process $Y_n=\RR_{n+t,t}-\EE_\pi[\RR_{n+t,t}|s_t]$ for $n \geq 1$ and $Y_0:=0$, where we remind that:
$$
\RR_{n,t}:=\sum_{t'=t}^{(n-1) \vee t} \gamma^{t'-t} R_{t'+1}
$$
The process $Y_n$ is adapted to the filtration generated by the sigma-algebras $\GG_n:=\FF_{n+t}$, since by definition $\FF_n:=\sigma(s_k,a_k,h_k, k \leq n)$. Hence, we can apply the Doob decomposition (cf. section \ref{bdoob}), and we get the following decomposition of the process $Y_{n}$ up to unicity $\mathbb{P}_\pi-$a.s., for $n \geq 0$:
$$
Y_{n}=Y_{0}+Y_{n}^{pred}+Y_{n}^{chaos}
$$
where $Y_{n}^{pred}$, $Y_{n}^{chaos}$ are respectively a predictable process and a martingale with respect to the filtration $(\GG_n)_{n \geq 0}$, satisfying $Y_{0}^{pred}=Y_{0}^{chaos}=0$, and are given by, for $n \geq 0$ (with the usual convention that $\sum_{k=1}^{0}(\cdot):=0$):
$$Y_{n}^{pred}=\sum_{k=1}^{n}(\EE_\pi[Y_{k}|\GG_{k-1}]-Y_{k-1}), \hspace{5mm}Y_{n}^{chaos}=\sum_{k=1}^{n}(Y_{k}-\EE_\pi[Y_{k}|\GG_{k-1}])$$ 
We have $Y_{0}=0$ and if $k \geq 2$:
$$
\EE_\pi[Y_{k}|\GG_{k-1}]-Y_{k-1}=\EE_\pi[\RR_{k+t,t}-\RR_{k+t-1,t}|\FF_{k+t-1}]
-\EE_\pi[\RR_{k+t,t}-\RR_{k+t-1,t}|s_t]$$$$
=\gamma^{k-1} (\EE_\pi[R_{k+t}|\FF_{k+t-1}] - \EE_\pi[R_{k+t}|s_t])
$$
If $k=1$ we get:
$$\EE_\pi[Y_{k}|\GG_{k-1}]-Y_{k-1}=\EE_\pi[Y_{1}|\GG_{0}]=\EE_\pi[R_{t+1}|\FF_{t}] - \EE_\pi[R_{t+1}|s_t]$$ 
Overall, this gives for $n \geq 1$:
$$
Y_{n}^{pred}=\sum_{k=1}^{n}\gamma^{k-1} (\EE_\pi[R_{k+t}|\FF_{k+t-1}] - \EE_\pi[R_{k+t}|s_t])
=\sum_{k=t}^{n-1+t}\gamma^{k-t} (\EE_\pi[R_{k+1}|\FF_{k}] - \EE_\pi[R_{k+1}|s_t])
$$
Now, we claim that $\EE_\pi[R_{k+1}|\FF_{k}]=\overline{R}(s_k,a_k)$. Indeed, by assumption 1 and since by definition $\FF_n:=\sigma(s_k,a_k,h_k, k \leq n)$, we get:
$$
\EE_\pi[R_{k+1}|\FF_{k}]=\EE_\pi[R_{k+1}|s_m,a_m,h_m : m \leq k]$$$$
=\int_{\mathbb{S}} \int_{\mathbb{H}} R(s_k,a_k,s',h') H(s_k,a_k,s',dh') P(s_k,a_k,ds')
=\EE[R_{k+1}|s_k,a_k]=\overline{R}(s_k,a_k)
$$
This yields for $n \geq 1$:
$$
Y_{n}^{pred}=\sum_{k=t}^{n-1+t}\gamma^{k-t} (\overline{R}(s_k,a_k) - \EE_\pi[R_{k+1}|s_t])=\RR_{n+t,t}^{\pi,pred}
$$
Similarly for $n \geq 1$: 
$$
Y_{n}^{chaos}=\sum_{k=t}^{n-1+t}\gamma^{k-t} (R_{k+1}-\overline{R}(s_k,a_k))=\RR_{n+t,t}^{chaos}
$$
Since $Y_{n}^{pred}$, $Y_{n}^{chaos}$ are respectively a predictable process and a martingale with respect to the filtration $(\GG_n)_{n \geq 0}$ by the Doob decomposition, and by definition $\GG_n:=\FF_{n+t}$, we get that $\RR_{n,t}^{\pi,pred}$, $\RR_{n,t}^{chaos}$ are respectively a predictable process and a martingale with respect to the filtration $\FF_n$, for $n \geq t+1$, and satisfy:
$$
\RR_{n,t}-\EE_\pi[\RR_{n,t}|s_t]=Y_{n-t}=Y_{n-t}^{pred}+Y_{n-t}^{chaos}=\RR_{n,t}^{\pi,pred}+\RR_{n,t}^{chaos}
$$

\subsection{Proof of proposition 1}

We have the Taylor expansion:
$$
e^{-\beta \RR_t^{chaos}}=1-\beta \RR_t^{chaos}+\frac{\beta^2}{2}(\RR_t^{chaos})^2+o(\beta^2)
$$
We remind that the quadratic variation $\left< M_n\right>$ of a discrete-time, square-integrable martingale $M$ adapted to a filtration $(\FF_n)_{n \geq 0}$ satisfies $\EE[\left< M_n\right>]=\EE[M_n^2]$ and is given by:
$$\left< M_n\right>=\sum_{k=1}^n  \EE[(M_k-M_{k-1})^2|\FF_{k-1}]
$$

By theorem 1, the process $\RR_{n,t}^{chaos}$ is a mean zero martingale, hence $\EE_\pi[\RR_{t}^{chaos}|s_t]=0$ by the dominated convergence theorem and the process $Z_{n}:=(\RR_{n,t}^{chaos})^2-\left<\RR_{n,t}^{chaos}\right>$ is a mean zero martingale, where $\left<\RR_{n,t}^{chaos}\right>$ is the predictable quadratic variation of the martingale $\RR_{n,t}^{chaos}$, given by:
$$\left< \RR_{n,t}^{chaos}\right>=\sum_{t'=t}^{(n-1) \vee t} \gamma^{2(t'-t)} \EE[(R_{t'+1}-\overline{R}(s_{t'},a_{t'}))^2|s_{t'},a_{t'}]
$$
This yields $\EE_\pi[Z_{\infty}|s_t]=\EE_\pi[Z_0|s_t]=0$, that is:
$$
\EE_\pi[(\RR_t^{chaos})^2|s_t]=\EE_\pi[\left<\RR_t^{chaos}\right>|s_t]
$$
All terms put together we get:
$$
\rho^{\beta,\pi}_{s_t}(\RR_t^{chaos})=\beta^{-1} \ln\left(1+\frac{\beta^2}{2}\EE_\pi[\left<\RR_t^{chaos}\right>|s_t]+o(\beta^2)\right)
=\frac{\beta}{2}\EE_\pi[\left<\RR_t^{chaos}\right>|s_t]+o(\beta)
$$
We now proceed to proving that:
$$\CC_{\bm{\rho^{\beta,\pi}}}[\RR_t](s_t) \leq \beta^{-1} \ln \sqrt{\EE_\pi[e^{2\beta^2 \left<\RR_t^{chaos}\right>}|s_t]}$$
Since the logarithm function is increasing, it is sufficient to prove that $\EE_\pi[e^{-\beta \RR_t^{chaos}}|s_t] \leq  \sqrt{\EE_\pi[e^{2\beta^2 \left<\RR_t^{chaos}\right>}|s_t]}$. Since $-\beta \RR_{n,t}^{chaos}$ is a (bounded) martingale, we get that $Y_{n}:=\exp(-2\beta\RR_{n,t}^{chaos}-2\beta^2\left<\RR_{n,t}^{chaos}\right>)$ is a martingale, namely the exponential martingale associated to $-2\beta \RR_{n,t}^{chaos}$. We then have:
$$
e^{-\beta \RR_{n,t}^{chaos}}=e^{-\beta \RR_{n,t}^{chaos}-\beta^2\left<\RR_{n,t}^{chaos}\right>}e^{\beta^2\left<\RR_{n,t}^{chaos}\right>}
$$
By H\"older's inequality and taking the limit as $n \to \infty$ (using the dominated convergence theorem), we get:
$$
\EE_\pi[e^{-\beta \RR_{t}^{chaos}}|s_t] \leq \sqrt{\EE_\pi[\underbrace{e^{-2\beta \RR_{t}^{chaos}-2\beta^2\left<\RR_{t}^{chaos}\right>}}_{Y_\infty}|s_t]}\sqrt{\EE_\pi[e^{2\beta^2\left<\RR_{t}^{chaos}\right>}|s_t]}
$$
Since $Y_n$ is a martingale, $\EE_\pi[Y_\infty|s_t]=\EE_\pi[Y_0|s_t]=1$, which gives the result.

\subsection{Proof of convergence of Chaotic Mean-Variance Q-Learning algorithm of theorem 3.}
By theorem 2 we have:
$$
Q^{\mathbb{V}(\beta)}_{\pi}(s_t,a_t)=\EE [\frac{\beta}{2} (R_{t+1}-\overline{R}(s_{t},a_{t}))^2
+V^{\mathbb{V}(\beta)}_{\pi}(s_{t+1})|s_t,a_t]
$$
We also have the classical Bellman equation:
$$
Q_{\pi}(s_t,a_t)=\EE [R_{t+1}+V_{\pi}(s_{t+1})|s_t,a_t]
$$
Hence $Q^\beta_{\pi}(s_t,a_t):=Q_{\pi}(s_t,a_t)-Q^{\mathbb{V}(\beta)}_{\pi}(s_t,a_t)$ (and its associated value function $V^\beta_\pi$) satisfies the classical Bellman equation with modified rewards $R^\beta$:
$$
Q^{\beta}_{\pi}(s_t,a_t)=\EE [ R^\beta_{t+1}
+V^{\beta}_{\pi}(s_{t+1})|s_t,a_t]
$$
where:
$$
R^\beta_{t+1}:=R_{t+1}-\frac{\beta}{2} (R_{t+1}-\overline{R}(s_{t},a_{t}))^2
$$
By assumption of the theorem, $\sum_{k=1}^\infty \alpha_{n_k(s,a)}=+\infty$, hence the $k^{th}$ visit index to $(s,a)$ $n_k(s,a) \to +\infty$ as $k \to +\infty$ and so every state-action pair is visited infinitely often. As a consequence we get that $N_t(s,a) \to +\infty$ as $t \to +\infty$ with probability 1 and by the strong law of large numbers, $\overline{R}_t(s,a) \to \overline{R}(s,a)$ as $t \to +\infty$ with probability 1. This guaranties in the proof of \cite{Dayan1992-qg} that in step B.3 of Lemma B (\textit{Rewards and transition probabilities converge with probability 1}), using their notations, the expected rewards $\mathcal{R}_s^{(t)}(a)$ of the so-called action-replay process (ARP) tend as $t \to +\infty$ to the expected rewards $\EE[R^\beta_{t+1}|s_t=s,a_t=a]$ of the real process. The rest of the convergence proof of \cite{Dayan1992-qg} goes through the same way. Note that in the latter reference, the proof is given in the discounted reward case with $\gamma<1$, but their section 4 "Discussions and Conclusions" discusses the proof extension to the episodic case with $\gamma=1$.

\subsection{Proof of proposition 2}
\label{proofbias}
We have by definition:
$$
Q^{\mathbb{V}(\beta)}_{\pi_\theta}(s,a)=\frac{\beta}{2}\EE_{\pi_\theta}[\left<\RR_{0}^{chaos}\right>|s_0=s,a_0=a]
$$
and $V^{\mathbb{V}(\beta)}_{\pi_\theta}$ the associated value function, as in definition 4. We have the below equality, which proof is very similar to that \cite{Prashanth2016-fz} (lemma 1). We postpone it at the end of the present proof for reader's convenience:
$$
\nabla_\theta V^{\mathbb{V}(\beta)}_{\pi_\theta}(s_0)=
\EE_{(S,A)\sim d^\theta_{\gamma^2}(s_0)}[\nabla \ln \pi_\theta(A|S) Q^{\mathbb{V}(\beta)}_{\pi_\theta}(S,A)]
$$
where $d^\theta_{\gamma^2}(s_0,s,a):=\pi_\theta(a|s) \sum_{t=0}^{\infty}\gamma^{2t} \mathbb{P}_{\pi_\theta}[s_t=s|s_0]$ is the action-state $\gamma^2$-discounted visiting distribution. Similarly we have, with $\overline{R}$ replaced by $\widehat{R}$:
$$
\nabla_\theta V^{\mathbb{V}(\beta)}_{\pi_\theta,\psi}(s_0)=
\EE_{(S,A)\sim d^\theta_{\gamma^2}(s_0)}[\nabla \ln \pi_\theta(A|S) Q^{\mathbb{V}(\beta)}_{\pi_\theta,\psi}(S,A)]
$$
where:
$$
Q^{\mathbb{V}(\beta)}_{\pi_\theta,\psi}(s,a):=\frac{\beta}{2}\EE_{\pi_\theta}[\left<\RR_{0,\psi}^{chaos}\right>|s_0=s,a_0=a]
$$
$$\left< \RR_{0,\psi}^{chaos}\right>=\sum_{t=0}^\infty \gamma^{2t} \EE[(R_{t+1}-\widehat{R}_\psi(s_{t},a_{t}))^2|s_{t},a_{t}]
$$
By definition we have:
$$
\left<\RR_{0,\psi}^{chaos}\right>-\left<\RR_{0}^{chaos}\right>
=\sum_{t=0}^\infty \gamma^{2t} (\widehat{R}_\psi(s_{t},a_{t})-\overline{R}(s_{t},a_{t})) \EE[\widehat{R}_\psi(s_t,a_t)+\overline{R}(s_t,a_t)-2R_{t+1}|s_{t},a_{t}]
$$
But:
$$
\EE[\widehat{R}_\psi(s_t,a_t)+\overline{R}(s_t,a_t)-2R_{t+1}|s_{t},a_{t}]
=\widehat{R}_\psi(s_t,a_t)+\overline{R}(s_t,a_t)-2\overline{R}(s_t,a_t)
=\widehat{R}_\psi(s_t,a_t)-\overline{R}(s_t,a_t)
$$
Hence:
$$
\left<\RR_{0,\psi}^{chaos}\right>-\left<\RR_{0}^{chaos}\right>
=\sum_{t=0}^\infty \gamma^{2t} (\widehat{R}_\psi(s_{t},a_{t})-\overline{R}(s_{t},a_{t}))^2
$$
which shows that:
$$
B_{\psi}^\theta(s_0)=\frac{\beta}{2}\EE_{(S,A)\sim d^\theta_{\gamma^2}(s_0)}[\nabla \ln \pi_\theta(A|S) b^\psi_{\pi_\theta}(S,A)]
$$
with:
$$
b^\psi_{\pi_\theta}(s,a):=\EE_{\pi_\theta}[\sum_{t=0}^\infty \gamma^{2t} (\widehat{R}_\psi(s_{t},a_{t})-\overline{R}(s_{t},a_{t}))^2|s_0=s,a_0=a]
$$

Finally, we prove as claimed earlier that:
$$
\nabla_\theta V^{\mathbb{V}(\beta)}_{\pi_\theta}(s_0)=
\EE_{(S,A)\sim d^\theta_{\gamma^2}(s_0)}[\nabla \ln \pi_\theta(A|S) Q^{\mathbb{V}(\beta)}_{\pi_\theta}(S,A)]
$$
The proof is very similar to that of \cite{Prashanth2016-fz} (lemma 1). We first use Bellman equation for $Q^{\mathbb{V}(\beta)}_{\pi_\theta}$ (theorem 2):
$$
Q^{\mathbb{V}(\beta)}_{\pi_\theta}(s_t,a_t)=\EE [\frac{\beta}{2}(R_{t+1}-\overline{R}(s_{t},a_{t}))^2
+\gamma^2 V^{\mathbb{V}(\beta)}_{\pi_\theta}(s_{t+1})|s_t,a_t]
$$
hence taking the gradient:
$$
\nabla_\theta Q^{\mathbb{V}(\beta)}_{\pi_\theta}(s_t,a_t)=\gamma^2 \EE [\nabla_\theta V^{\mathbb{V}(\beta)}_{\pi_\theta}(s_{t+1})|s_t,a_t]
=\gamma^2 \int_{\mathbb{S}}\nabla_\theta V^{\mathbb{V}(\beta)}_{\pi_\theta}(s')P(s_t,a_t,s')ds'
$$

On the other hand by definition of the value function:
$$
\nabla_\theta V^{\mathbb{V}(\beta)}_{\pi_\theta}(s_t)=\nabla_\theta \int_{\mathbb{A}} \pi_\theta(a|s_t)Q^{\mathbb{V}(\beta)}_{\pi_\theta}(s_t,a)da
=\int_{\mathbb{A}} \nabla_\theta \pi_\theta(a|s_t)Q^{\mathbb{V}(\beta)}_{\pi_\theta}(s_t,a)da
+\int_{\mathbb{A}} \pi_\theta(a|s_t)\nabla_\theta Q^{\mathbb{V}(\beta)}_{\pi_\theta}(s_t,a)da
$$
Plugging in the expression obtained for $\nabla_\theta Q^{\mathbb{V}(\beta)}_{\pi_\theta}(s_t,a_t)$ we get:
$$
\nabla_\theta V^{\mathbb{V}(\beta)}_{\pi_\theta}(s_t)=\int_{\mathbb{A}} \left[ \nabla_\theta \pi_\theta(a|s_t)Q^{\mathbb{V}(\beta)}_{\pi_\theta}(s_t,a) \right.
\left. +\gamma^2\pi_\theta(a|s_t)\int_{\mathbb{S}}\nabla_\theta V^{\mathbb{V}(\beta)}_{\pi_\theta}(s')P(s_t,a,s')ds' \right]da
$$
After unrolling $\nabla_\theta V^{\mathbb{V}(\beta)}_{\pi_\theta}(s')$ infinitely many times we get:
$$
\nabla_\theta V^{\mathbb{V}(\beta)}_{\pi_\theta}(s_t)=
\int_{\mathbb{A}} \int_{\mathbb{S}}\sum_{t'=t}^{\infty} \gamma^{2(t'-t)}\mathbb{P}_{\theta}[s_{t'}=s|s_t] \nabla_\theta \pi_\theta(a|s)Q^{\mathbb{V}(\beta)}_{\pi_\theta}(s,a) ds da
$$
Since $\nabla_\theta \pi_\theta(a|s)=\pi_\theta(a|s) \nabla_\theta \ln\pi_\theta(a|s)$, and using the definition of $d^\theta_{\gamma^2}(s_t,s,a)$, we get that $\nabla_\theta V^{\mathbb{V}(\beta)}_{\pi_\theta}(s_t)$ is equal to:
$$
\int_{\mathbb{A}} \int_{\mathbb{S}}\nabla_\theta \ln \pi_\theta(a|s)Q^{\mathbb{V}(\beta)}_{\pi_\theta}(s,a) d^\theta_{\gamma^2}(s_t,s,a) ds da
$$
which completes the proof.

\section{Toy example of section 1.2: generalization and quantitative results}
\label{secc}
We slightly generalize the example in section 1.2 by considering the case where there are $N$ states, 2 actions, and at each state transition the probability to reach another state is given by $P(s_{t+1}=n|s_t,a_t)=P(s_{0}=n)=p_n \in [0,1]$ such that $\sum_{n=1}^N p_n=1$ and the reward are as follows:
\begin{itemize}
\item if action 1 is chosen, the reward received at $t+1$ if $s_t=n$ is the constant $\mu_n \in \mathbb{R}$.
\item if action 2 is chosen, the reward received at $t+1$ if $s_t=n$ is $\mu_n+\kappa_n+\sigma_n h_{t+1}$, where $(h_{t})$ are zero mean and unit variance i.i.d. and $\kappa_n,\sigma_n \in \mathbb{R}$.
\end{itemize}
Hence the reward has the compact formulation:
$$
R(s_{t}=n,a_t,s_{t+1},h_{t+1})=\mu_n+(\kappa_n+\sigma_n h_{t+1})1_{\{a_t=2\}}
$$
where we denote the indicator function by $1_{\{\cdot\}}$. The below example \ref{counter} formulates quantitatively the informal discussion in section 1.2 by showing that, using cumulative reward variance as a measure of risk and provided the noise is small enough, a policy that leaves the agent with a truly risky component $h_t$ (i.e. that can hit any arbitrarily low value with positive probability) may be established as less risky than a policy that doesn't, hence that "true risk" fails to be captured using this standard criterion.
\begin{example}
\label{counter}
(Risk fails to be captured using cumulative reward variance as a measure of risk). Let us consider the regime-switching example described above, denote $\VV$ the variance operator and introduce the classical variance penalty $v_\pi(\RR_0)$ (variance of the sum is equal to the sum of variances in this specific example as the rewards $R_t$ are independent):
$$
v_\pi(\RR_0):=\VV_\pi\left[\sum_{t=1}^T R_t \right]=\sum_{t=1}^T \VV_\pi[R_t]
$$
If $\sum_{n=1}^N p_n \kappa_n=0$ and $\pi_i$ is the policy that always selects action $i$, we get:
$$
v_{\pi_2}(\RR_0)-v_{\pi_1}(\RR_0)=T\sum_{n=1}^N p_n \left( \kappa_n^2+2\mu_n\kappa_n+\sigma_n^2\right)
$$
In particular, $v_{\pi_2}(\RR_0)-v_{\pi_1}(\RR_0)$ can be negative. For example if $N=2$, $p_n=0.5$, $\mu_2=\mu_1+\delta$, $\kappa_2=-\delta \epsilon=-\kappa_1$, then:
$
v_{\pi_2}(\RR_0)-v_{\pi_1}(\RR_0)=T\delta^2 \left( \epsilon^2-\epsilon+\frac{1}{2\delta^2}(\sigma_1^2+\sigma_2^2)\right)
$
i.e. the latter is negative provided the average noise is small enough $\frac{1}{2}(\sigma_1^2+\sigma_2^2)<\frac{1}{4}\delta^2$.
\end{example}
\begin{proof}
We have, with $\pi(n):=\mathbb{P}[a_t=2|s_t=n]$:
$$
\sum_{t=1}^T \VV_\pi[R_t]=\sum_{t=1}^T\sum_{n=1}^N p_n \EE_\pi[R_t^2|s_{t-1}=n]
-\sum_{t=1}^T(\sum_{n=1}^N p_n \EE_\pi[R_t|s_{t-1}=n])^2
$$
$$
=T\sum_{n=1}^N p_n
(\mu_n^2+(\kappa_n^2+\sigma_n^2+2\mu_n \kappa_n)\pi(n))
-T(\sum_{n=1}^N p_n (\mu_n+ \pi(n) \kappa_n))^2
$$
By definition of $\pi_1$ and $\pi_2$, we have $\pi_1(n)=0$ and $\pi_2(n)=1$ for all $n$ and hence:
$$
v_{\pi_2}(\RR_0)-v_{\pi_1}(\RR_0)
=T\sum_{n=1}^N p_n
(\kappa_n^2+\sigma_n^2+2\mu_n \kappa_n)
$$
$$
-(\sum_{n=1}^N p_n (\mu_n+\kappa_n))^2+(\sum_{n=1}^N p_n\mu_n)^2
$$
Since by assumption $\sum_{n=1}^N p_n \kappa_n=0$, we get the desired result. If $N=2$, $p_n=0.5$, $\mu_2=\mu_1+\delta$, $\kappa_2=-\delta \epsilon=-\kappa_1$ then:
$$
v_{\pi_2}(\RR_0)-v_{\pi_1}(\RR_0)
$$$$=\frac{T}{2}(2\delta^2 \epsilon^2+\sigma_1^2+\sigma_2^2+2\mu_1\delta \epsilon - 2\delta \epsilon(\mu_1+\delta))
=T\delta^2(\epsilon^2+\frac{\sigma_1^2+\sigma_2^2}{2\delta^2}- \epsilon)
$$
The latter is a 2nd order polynomial in $\epsilon$ with positive $\epsilon^2$ coefficient, hence it can take negative values if and only if it has two distinct real roots, i.e. if $\frac{1}{2}(\sigma_1^2+\sigma_2^2)<\frac{1}{4}\delta^2$.
\end{proof}

Proposition \ref{rschao} below shows that the chaotic variance $V^{\mathbb{V}(\beta)}_{\pi}$ is proportional to the hidden noise, and in particular is zero in the absence of such noise. That is, chaotic variance captures the risky component $h_t$ contained in the rewards.

\begin{proposition}
\label{rschao}
The chaotic variance (cf. definition 4) associated to the regime-switching example \ref{counter} is given by:
$$
V^{\mathbb{V}(\beta)}_{\pi}=\frac{\beta}{2}T \sum_{n=1}^N \sigma^2_n p_n \pi(n)
$$
where $\pi(n):=\PP[a_t=2|s_t=n]$ and $V^{\mathbb{V}(\beta)}_{\pi}:=\sum_{n=1}^N V^{\mathbb{V}(\beta)}_{\pi}(n)p_n$.
\end{proposition}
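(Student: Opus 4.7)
The approach is a direct specialization of the general formula for the predictable quadratic variation (cf. Proposition 1) to the regime-switching example. Since the example is episodic with $T$ timesteps and $\gamma=1$, the relevant quantity is
$$
\langle \RR_0^{chaos}\rangle = \sum_{t'=0}^{T-1} \EE[(R_{t'+1}-\overline{R}(s_{t'},a_{t'}))^2 \mid s_{t'},a_{t'}],
$$
and, by Definition 4, $V_\pi^{\mathbb{V}(\beta)}(s_0) = \frac{\beta}{2}\EE_\pi[\langle \RR_0^{chaos}\rangle \mid s_0]$.

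First I would compute the two ingredients that appear in the square bracket. From the reward specification, $\overline{R}(s_t=n,a_t)=\mu_n+\kappa_n \mathbf{1}_{\{a_t=2\}}$ since $h_{t+1}$ has zero mean, so
$$
R_{t+1}-\overline{R}(s_t,a_t) = \sigma_n h_{t+1}\,\mathbf{1}_{\{a_t=2\}}.
$$
Squaring and taking the conditional expectation given $(s_t=n,a_t)$, the independence and unit variance of $h_{t+1}$ give
$$
\EE[(R_{t+1}-\overline{R}(s_t,a_t))^2\mid s_t=n,a_t] = \sigma_n^2\,\mathbf{1}_{\{a_t=2\}}.
$$

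Next I would take the $\pi$-expectation conditional on $s_0$. Because the transition probabilities $P(s_{t+1}=n\mid s_t,a_t)=p_n$ do not depend on $(s_t,a_t)$, the states $(s_{t'})_{t'\geq 1}$ are i.i.d.\ with law $p$, and at each visited state $n$ the action equals $2$ with probability $\pi(n)$. Hence for any $t'\geq 1$,
$$
\EE_\pi[\sigma_{s_{t'}}^2\mathbf{1}_{\{a_{t'}=2\}} \mid s_0]=\sum_{n=1}^N p_n\sigma_n^2\pi(n),
$$
while for $t'=0$ the same expectation equals $\sigma_{s_0}^2\pi(s_0)$. Averaging over $s_0\sim p$ in the outer $\sum_{n} p_n V_\pi^{\mathbb{V}(\beta)}(n)$ turns this boundary term into $\sum_n p_n\sigma_n^2\pi(n)$ as well, so summing $T$ identical contributions and multiplying by $\beta/2$ yields the claimed $\frac{\beta}{2}T\sum_{n}\sigma_n^2 p_n\pi(n)$.

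There is no real obstacle here; the calculation is fully mechanical once one observes the two simplifying features of the example: (i) $h_{t+1}$ is i.i.d.\ zero-mean unit-variance and enters only through action $2$, so the conditional per-step variance is the clean expression $\sigma_n^2\mathbf{1}_{\{a=2\}}$ with no cross terms; and (ii) the transition kernel is independent of $(s_t,a_t)$, which makes the marginal law of $s_{t'}$ equal to $p$ for every $t'\geq 1$ and removes any dependence on the trajectory when the outer expectation is evaluated. The only mild care needed is in reconciling the $t'=0$ term with the remaining $T-1$ terms after averaging over $s_0\sim p$.
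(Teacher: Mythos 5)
Your proposal is correct and follows essentially the same route as the paper's proof: compute the per-step conditional variance $\sigma_n^2\mathbf{1}_{\{a=2\}}$ from the reward specification, sum over the $T$ steps using the fact that the transition law $p$ does not depend on $(s_t,a_t)$, treat the $t'=0$ term separately, and average over $s_0\sim p$ to recover $\frac{\beta}{2}T\sum_n \sigma_n^2 p_n\pi(n)$. No gaps; the handling of the boundary term matches the paper's split between the first step and the remaining $T-1$ steps.
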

\begin{proof}
By definition:
$$
\left<\RR_0^{chaos}\right>= \sum_{t=1}^T \EE_\pi[(R_t-\EE_\pi[R_t|s_{t-1},a_{t-1}])^2|s_{t-1},a_{t-1}]
$$
By definition of $R_t$ we get:
$$
\EE_\pi[R_t|s_{t-1},a_{t-1}]=\mu_{s_{t-1}}+(\kappa_{s_{t-1}}+\sigma_{s_{t-1}} h_{t})1_{\{a_{t-1}=2\}})
$$
and hence:
$$
R_t-\EE_\pi[R_t|s_{t-1},a_{t-1}]=1_{\{a_{t-1}=2\}}\sigma_{s_{t-1}} h_{t}
$$
so that:
$$
\EE_\pi[(R_t-\EE_\pi[R_t|s_{t-1},a_{t-1}])^2|s_{t-1},a_{t-1}]=1_{\{a_{t-1}=2\}}\sigma^2_{s_{t-1}}
$$
and therefore:
$$
\left<\RR_0^{chaos}\right>= \sum_{t=1}^T 1_{\{a_{t-1}=2\}}\sigma^2_{s_{t-1}}
$$
By definition the chaotic variance is given by $V^{\mathbb{V}(\beta)}_{\pi}(n)=\frac{\beta}{2} \EE_\pi[\left<\RR_0^{chaos}\right>|s_0=n]$, and hence:
$$
V^{\mathbb{V}(\beta)}_{\pi}(n)=\frac{\beta}{2}\EE_\pi[1_{\{a_{0}=2\}}\sigma^2_{s_{0}}|s_0=n]
+\frac{\beta}{2}\sum_{t=2}^T\EE_\pi[1_{\{a_{t-1}=2\}}\sigma^2_{s_{t-1}}|s_0=n]
$$
But $\EE_\pi[1_{\{a_{0}=2\}}\sigma^2_{s_{0}}|s_0=n]=\pi(n)\sigma^2_n$ and for $t \geq 2$:
$$
\EE_\pi[1_{\{a_{t-1}=2\}}\sigma^2_{s_{t-1}}|s_0=n]=\sum_{k=1}^N\pi(k)\sigma^2_k
p_k$$
Plugging in the latter expression we get all in all:
$$
V^{\mathbb{V}(\beta)}_{\pi}(n)=\frac{\beta}{2}\pi(n)\sigma^2_n+\frac{\beta}{2}(T-1)\sum_{k=1}^N\pi(k)\sigma^2_k p_k
$$
Since by definition $V^{\mathbb{V}(\beta)}_{\pi}=\sum_{n=1}^N V^{\mathbb{V}(\beta)}_{\pi}(n)p_n$ we get eventually:
$$
V^{\mathbb{V}(\beta)}_{\pi}=\frac{\beta}{2}\sum_{n=1}^N\pi(n)\sigma^2_n p_n+\frac{\beta}{2}(T-1)\sum_{n=1}^N p_n\sum_{k=1}^N\pi(k)\sigma^2_k p_k
=T\frac{\beta}{2}\sum_{n=1}^N p_n\pi(n)\sigma^2_n
$$
\end{proof}

\section{Average reward version of the Chaotic Mean-Variance Q-Learning algorithm (theorem 3)}
\label{AQL}

In the average reward framework, some modifications are required since $\sum_{t=0}^\infty R_{t+1}$ doesn't necessarily converge anymore. We follow the spirit of \cite{Prashanth2016-fz}, and start by defining 
$$
\rho_\pi:=\lim_{n \to \infty}\frac{1}{n}\sum_{t=0}^n R_{t+1}
,\hspace{3mm}
\widetilde{\RR}_{t}:=\sum_{t'=t}^\infty (R_{t'+1}-\rho_\pi)
$$
$$
Q_{\pi}(s_t,a_t)=\EE_\pi[\widetilde{\RR}_{t}|s_t,a_t]
$$
The chaotic variance process needs to be modified similarly according to a recentering by $\sigma_\pi$: 
$$
\sigma_\pi:=\lim_{n \to \infty}\frac{1}{n}\sum_{t=0}^n (R_{t+1}-\overline{R}(s_{t},a_{t}))^2
$$ 
$$
\RR^{\mathbb{V}(\beta)}_{t}:=\frac{\beta}{2}\sum_{t'=t}^\infty ((R_{t+1}-\overline{R}(s_{t},a_{t}))^2-\sigma_\pi)
$$
$$
Q^{\mathbb{V}(\beta)}_{\pi}(s_t,a_t)=\EE_\pi[\RR^{\mathbb{V}(\beta)}_{t}|s_t,a_t]
$$ 

In that case we get the following Bellman equations, similar to theorem 2, which will be used in theorem \ref{cmvrl}:
$$
Q_{\pi}(s_t,a_t)=\EE [R_{t+1}-\rho_\pi
+\gamma V_{\pi}(s_{t+1})|s_t,a_t]
$$
$$
Q^{\mathbb{V}(\beta)}_{\pi}(s_t,a_t)=\EE [ \frac{\beta}{2}(R_{t+1}-\overline{R}(s_{t},a_{t}))^2-\frac{\beta}{2}\sigma_\pi
+\gamma^2 V^{\mathbb{V}(\beta)}_{\pi}(s_{t+1})|s_t,a_t]
$$
We present below the chaotic mean-variance version of the R-learning algorithm in \cite{aql}. The algorithm is similar to the one presented in the episodic case (theorem 3), except that the rewards and the chaotic variance have to be adjusted by their mean value as precised above (ergodic limit). The mean values $\rho_\pi$ and $\sigma_\pi$ have to be estimated during the course of the algorithm.

\begin{theorem}
\label{cmvrl}
\textit{(Chaotic Mean-Variance R-Learning in the average reward case)}. Let $Q^\beta_{\pi}(s_t,a_t):=Q_{\pi}(s_t,a_t)-Q^{\mathbb{V}(\beta)}_{\pi}(s_t,a_t)$ and $(s_t)$, $(a_t)$ and $(R_{t+1})$ the successive states, actions and rewards observed by the agent. Let $(\alpha^{(1)}_t)$, $(\alpha^{(2)}_t)$ be two sequences of learning rates. The Chaotic Mean-Variance R-Learning algorithm is given by:
\begin{equation*}
\begin{aligned}
&N_t(s,a)=N_{t-1}(s,a)+1\\
&\overline{R}_t(s,a)=\overline{R}_{t-1}(s,a)+\frac{1}{N_t(s,a)}(R_{t+1}-\overline{R}_{t-1}(s,a))\\
&Q^\beta_{t}(s,a)=(1-\alpha^{(1)}_t)Q^\beta_{t-1}(s,a)+\alpha^{(1)}_t(R_{t+1}-\rho_{t-1}
-\frac{1}{2}\beta((R_{t+1}-\overline{R}_t(s,a))^2-\sigma_{t-1})
+\max_{a'} Q^\beta_{t-1}(s_{t+1},a'))\\
&\rho_{t}=(1-\alpha^{(2)}_t)\rho_{t-1}
+\alpha^{(2)}_t(R_{t+1}+\max_{a'}Q^\beta_{t-1}(s_{t+1},a)-\max_{a'}Q^\beta_{t-1}(s,a))\\
&\sigma_{t}=(1-\alpha^{(2)}_t)\sigma_{t-1}+\alpha^{(2)}_t((R_{t+1}-\overline{R}_t(s,a))^2
+\max_{a'}Q^\beta_{t-1}(s_{t+1},a)-\max_{a'}Q^\beta_{t-1}(s,a))
\end{aligned}
\end{equation*}
if $s_t=s$ and $a_t=a$ and $Q^\beta_{t}(s,a)=Q^\beta_{t-1}(s,a)$, $N_{t}(s,a)=N_{t-1}(s,a)$, $\overline{R}_{t}(s,a)=\overline{R}_{t-1}(s,a)$ otherwise.
\end{theorem}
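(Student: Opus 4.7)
The plan is to reduce the analysis to classical R-learning convergence applied to an appropriately modified MDP, combined with a two-timescale stochastic approximation argument to decouple the three iterates $Q^\beta_t$, $(\rho_t,\sigma_t)$, and $\overline{R}_t$. First I would combine the two average-reward Bellman equations stated before the theorem to obtain a single Bellman equation for $Q^\beta_\pi := Q_\pi - Q^{\mathbb{V}(\beta)}_\pi$ of the form
$$
Q^\beta_\pi(s,a)=\EE\!\left[R^\beta_{t+1}-\big(\rho_\pi - \tfrac{\beta}{2}\sigma_\pi\big)+\gamma\, V^\beta_\pi(s_{t+1})\,\big|\,s_t=s,a_t=a\right],
$$
where $R^\beta_{t+1}:=R_{t+1}-\tfrac{\beta}{2}(R_{t+1}-\overline{R}(s_t,a_t))^2$. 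This identifies the target of the algorithm as the Q-function of a standard average-reward MDP with modified reward $R^\beta$ and average-reward centering $\rho_\pi-\tfrac{\beta}{2}\sigma_\pi$, for which R-learning is the canonical recursion.

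Next I would handle the three iterates in turn. The $\overline{R}_t(s,a)$ update is a sample-mean iteration that is \emph{uncoupled} from $Q^\beta_t$, $\rho_t$ and $\sigma_t$: under Assumption 2 and a suitable exploration rule, every state-action pair is visited infinitely often, so by the strong law of large numbers $\overline{R}_t(s,a)\to\overline{R}(s,a)$ almost surely. Hence no timescale separation is needed for this component (cf. Remark 1), and the effective per-step modified reward used in the $Q^\beta_t$ update converges a.s.\ to $R^\beta_{t+1}$. The $(\rho_t,\sigma_t)$ iterates are Robbins–Monro recursions that track the ergodic averages $\EE_{\pi_t}[R_{t+1}]$ and $\EE_{\pi_t}[(R_{t+1}-\overline{R}(s_t,a_t))^2]$ (up to the usual telescoping $\max_{a'}Q^\beta_{t-1}(s_{t+1},a')-\max_{a'}Q^\beta_{t-1}(s,a)$ that renders R-learning insensitive to the additive constant in $Q^\beta$), driven by the fast-timescale variable $Q^\beta_t$.

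I would then invoke a two-timescale stochastic approximation argument in the spirit of Borkar: choosing $\alpha^{(1)}_t,\alpha^{(2)}_t$ with $\alpha^{(2)}_t/\alpha^{(1)}_t\to 0$ together with the standard square-summable/non-summable conditions, on the fast timescale $Q^\beta_t$ evolves with $(\rho_t,\sigma_t,\overline{R}_t)$ treated as quasi-static, so the recursion reduces to a perturbed R-learning update with bounded modified reward $R^\beta_{t+1}$ and centering $\rho_t-\tfrac{\beta}{2}\sigma_t$; the associated ODE has an asymptotically stable equilibrium at $Q^\beta_*$ by the contraction argument of \cite{aql}. On the slow timescale, $Q^\beta_t$ is seen as already equal to $Q^\beta_{*}$ (up to a harmless additive constant), and the ODE for $(\rho_t,\sigma_t)$ converges to $(\rho_{\pi_*}-\text{constant},\sigma_{\pi_*}-\text{constant})$, with the constants absorbed into $Q^\beta_*$.

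The main obstacle is the well-known delicacy of R-learning convergence: the Q-function is only determined up to an additive constant, and stability of the iterates typically requires either a projection, a reference-state normalization, or a careful bounded-iterates argument; moreover, the effective rewards here are perturbed by $\overline{R}_t-\overline{R}$, which must be shown to act as a vanishing perturbation. I would address this by leveraging Assumption 1 (uniform boundedness of $R$) to ensure that $R^\beta_{t+1}$ and its estimated version remain uniformly bounded throughout, so that the perturbation vanishes a.s.\ and the standard stability arguments of \cite{aql} (and the subsequent two-timescale analysis) carry through with only routine modifications; the rest is a direct invocation of those results.
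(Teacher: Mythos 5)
The theorem you are asked to prove makes no convergence claim: it only specifies the update rules of the average-reward algorithm, and the paper's entire justification is the pair of recentered Bellman equations displayed immediately before it (combined into one equation for $Q^\beta_\pi=Q_\pi-Q^{\mathbb{V}(\beta)}_\pi$ with modified reward $R^\beta$ and centering $\rho_\pi-\frac{\beta}{2}\sigma_\pi$, which is legitimate here only because the average-reward setting has no discounting, i.e. $\gamma=1$; otherwise the $\gamma$ and $\gamma^2$ terms cannot be merged). Your first step reproduces exactly this derivation, together with the observation that the $\overline{R}_t(s,a)$ iteration is uncoupled and converges a.s.\ by the strong law of large numbers, so on that portion you and the paper coincide.

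The gap is in everything after that. You assert almost-sure convergence of the coupled $(Q^\beta_t,\rho_t,\sigma_t)$ iterates via a two-timescale ODE argument whose fast-timescale stability rests on ``the contraction argument of \cite{aql}''---but \cite{aql} (Schwartz's R-learning paper) contains no convergence proof and no contraction argument; convergence of R-learning is precisely the open, delicate point, which you name as ``the main obstacle'' and then dismiss as carrying through ``with only routine modifications.'' Boundedness of the iterates, the additive-constant indeterminacy of the average-reward Q-function, and the stability of the coupled ODE system are the non-routine content of any such proof (they are handled in the literature only for modified schemes such as RVI-type Q-learning, under extra conditions on the stepsizes and the chain), and nothing in the theorem's hypotheses---which impose no relation between $\alpha^{(1)}_t$ and $\alpha^{(2)}_t$ at all---supplies them. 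Note also that the $\rho_t$ and $\sigma_t$ recursions are driven by the combined function $Q^\beta_{t-1}$, so even heuristically $\rho_t$ tracks a $\beta$-shifted quantity rather than $\rho_\pi$ itself; your claim that they track the ergodic averages ``up to the usual telescoping'' glosses over this. In short, the part of your proposal that matches the statement (derivation of the update rules from the recentered Bellman equations) is correct and is all the paper itself does; the convergence proof you sketch on top of it is unsupported at its central step and would not go through as written.
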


\section{Episodic Monte Carlo Policy gradient algorithms for additional risk measures: the chaotic CVaR and Sharpe Ratio cases}
\label{secee}

Here we discuss the extension of CMV-REINFORCE algorithm 3 to the chaotic Sharpe ratio and CVaR cases. 

\subsection{The chaotic Sharpe Ratio case}
\label{CSR}
The chaotic Sharpe ratio - which we seek to maximize - is defined as:
$$
\CC_{\pi}^{Sh}[\RR_t](s_0):=\frac{V_{\pi}(s_0)}{\sqrt{V^{\mathbb{V}}_{\pi}(s_0)}}
$$
where we remind that $V_{\pi}(s_t):=E_\pi[\RR_t|s_t]$, $V^{\mathbb{V}}_{\pi}(s_t):=V^{\mathbb{V}(2)}_{\pi}(s_t)=\EE_\pi[\left<\RR_t^{chaos}\right>|s_t]$, and we assume that $V^{\mathbb{V}}_{\pi}(s_0)>0$ $\mathbb{P}_\pi$-a.s. Taking the gradient, we get: 
$$
\nabla_\theta \CC_{\pi}^{Sh}[\RR_t](s_0)=\frac{\nabla_\theta V_\pi(s_0)}{\sqrt{V^{\mathbb{V}}_{\pi}(s_0)}}  -\frac{1}{2}\frac{V_\pi(s_0) \nabla_\theta V^{\mathbb{V}}_{\pi}(s_0)}{V^{\mathbb{V}}_{\pi}(s_0)^{3/2}}
$$
Provided $V_{\pi}(s_0)$ and $V^{\mathbb{V}}_{\pi}(s_0)$ are known, we can compute unbiased estimates of the gradients $\nabla_\theta V_\pi(s_0)$, $\nabla_\theta V^{\mathbb{V}}_\pi(s_0)$ as done in algorithm 3, in particular updating $\widehat{R}_\psi$ the same way. In order to estimate $V_{\pi}(s_0)$ and $V^{\mathbb{V}}_{\pi}(s_0)$, we use the techniques developed in \cite{Tamar2012-ki} (theorem 4.3) which uses two timescales: $V_{\pi}(s_0)$ and $V^{\mathbb{V}}_{\pi}(s_0)$ are estimated on the faster timescale $\alpha^{(2)}_n$ so that they can be considered as converged when the $\theta$ update is performed on the slower timescale $\alpha^{(1)}_n$. We impose $\lim_{n \to \infty} \frac{\alpha^{(1)}_n}{\alpha^{(2)}_n}=0$ and we perform the fast timescales updates:
$$
V_{n+1}=V_n+\frac{\alpha^{(2)}_n}{B} \sum_{b=1}^{B} \sum_{t=0}^{T_b-1} \gamma^{t}R^{(b)}_{t+1}
$$
$$
V^{\mathbb{V}}_{n+1}=V^{\mathbb{V}}_n+\frac{\alpha^{(2)}_n}{B} \sum_{b=1}^{B} \sum_{t=0}^{T_b-1} \gamma^{2t} (R^{(b)}_{t+1} - \widehat{R}_\psi(s^{(b)}_t,a^{(b)}_t))^2
$$
The unbiased estimates of the gradients are given as in algorithm 3 by:
$$
\nabla V_n =\frac{1}{B} \sum_{b=1}^{B} \sum_{t'=0}^{T_b-1} \nabla \ln \pi_{\theta_n}(a^{(b)}_{t'}|s^{(b)}_{t'}) \sum_{t=t'}^{T_b-1} \gamma^{t} R^{(b)}_{t+1}
$$
$$
\nabla V^{\mathbb{V}}_n =\frac{1}{B} \sum_{b=1}^{B} \sum_{t'=0}^{T_b-1} \nabla \ln \pi_{\theta_n}(a^{(b)}_{t'}|s^{(b)}_{t'}) 
\sum_{t=t'}^{T_b-1}  \gamma^{2t}(R^{(b)}_{t+1} - \widehat{R}_\psi(s^{(b)}_t,a^{(b)}_t))^2
$$	   
Finally, $\theta$ is updated on the slow timescale by:
$$
\theta_{n+1}=\theta_n+ \alpha^{(1)}_n \left(\frac{\nabla V_n}{\sqrt{V^{\mathbb{V}}_n}}  -\frac{1}{2}\frac{V_n \nabla V^{\mathbb{V}}_n}{V^{\mathbb{V},3/2}_n}\right)
$$
For completeness we present the assumptions under which the above algorithm is guaranteed to converge, cf. theorem 4.3 in \cite{Tamar2012-ki} which proof uses two timescales and an ODE based approach:
\begin{itemize}
    \item $\lim_{n \to \infty} \frac{\alpha^{(1)}_n}{\alpha^{(2)}_n}=0$, and for $j=1,2$: $\sum_{n=0}^\infty \alpha^{(j)}_n=+\infty$, $\sum_{n=0}^\infty \alpha^{(j)2}_n<+\infty$.
    \item Assumptions 1, 2 hold true (which guarantee in particular that $V_\pi$ and $V^{\mathbb{V}}_\pi$ are uniformly bounded).
    \item For all $\theta$, the objective function $f_{\theta}$ has bounded second derivatives. Furthermore, the set of local optima of $f_{\theta}$ is countable. Here the objective function $f_{\theta}$ is defined as:
    $$
    f_{\theta}(s)=\CC_{\pi_\theta}^{Sh}[\RR_t](s)
    $$
\end{itemize}

\subsection{The chaotic CVaR case}
\label{CSR2}

We recall that the $CVaR_\beta$ of a random variable $X$ is defined as $CVaR_\beta(X):=\EE[X|X \leq Var_\beta(X)]$, where the value-at-risk $Var_\beta(X)=F_X^{-1}(X)$, where $F_X$ is the cumulative distribution function of $X$. We adapt the work \cite{Chow2018-eq} (algorithm 1) to the chaotic case, i.e. the case where $X=\mathcal{R}_t^{chaos}$ is the chaotic reward process. In order to obtain a unbiased estimate of the gradient of
$$
\CC_{\pi_\theta}^{CVaR(\beta)}[\RR_t](s_0):=
\EE_{\pi_\theta}[\mathcal{R}_t^{chaos}(s_0)|s_t,\mathcal{R}_t^{chaos}(s_0) \leq Var_\beta(\mathcal{R}_t^{chaos}(s_0))]
$$
we first estimate as in \cite{Chow2018-eq} (algorithm 1) $Var_\beta(\mathcal{R}_t^{chaos}(s_t))$ on a fast timescale $\alpha^{(2)}_n$ so that it can be considered as converged when performing the $\theta$ update:
$$
Var_{n+1}=Var_{n}-\alpha^{(2)}_n(1-(1-\beta)^{-1} \frac{1}{B} \sum_{b=1}^{B} 1\{Z_b \geq Var_{n}\})
$$
$$
Z_b:=\sum_{t=0}^{T_b-1} \gamma^{2t} (R^{(b)}_{t+1} - \widehat{R}_\psi(s^{(b)}_t,a^{(b)}_t))
$$
We can then compute an unbiased estimate of the gradient $\widehat{\nabla CVaR}$ of $\nabla \CC_{\pi_\theta}^{CVaR(\beta)}[\RR_t](s_0)$ as:
$$
\widehat{\nabla CVaR_n}=\frac{1}{B}\sum_{b=1}^{B}\nabla \ln \pi_{\theta_n,b}(Z_b-Var_{n}) 1\{Z_b \geq Var_{n}\}
$$
$$
\nabla \ln \pi_{\theta_n,b}:= \sum_{t=0}^{T_b-1} \nabla \ln \pi_{\theta_n}(a^{(b)}_{t}|s^{(b)}_{t}) 
$$
Eventually, $\theta$ is updated on the slow timescale $\alpha^{(1)}_n$ as:
$$
\theta_{n+1}=\theta_n+\alpha^{(1)}_n(\frac{1}{B}\sum_{b=1}^{B}J_b - \widehat{\nabla CVaR_n})
$$
where $J_b$ is as in algorithm 3.

\section{Policy Gradient Actor-Critic algorithms}
\label{secAC}

In the episodic, discounted or average reward settings, we can use the Bellman equations in theorem 2 to adapt the work of e.g. \cite{Prashanth2016-fz} (variance case) or \cite{Chow2018-eq} (CVaR case) in order to derive chaotic versions of their actor-critic algorithms. 

In the chaotic mean-variance discounted reward case, and under assumption 2, the policy gradient theorem gives us for the expected reward and risk-sensitive gradients:
\begin{equation}
\begin{aligned}
\label{eqac}
&\nabla_\theta \EE_\pi[\RR_0|s_0]=
\EE_{(S,A)\sim d^\theta_{\gamma}(s_0)}[\nabla \ln \pi_\theta(A|S) Q_\pi(S,A)]\\
&\nabla_\theta V^{\mathbb{V}(\beta)}_{\pi_\theta}(s_0)=
\frac{\beta}{2} \EE_{(S,A)\sim d^\theta_{\gamma^2}(s_0)}[\nabla \ln \pi_\theta(A|S) Q^{\mathbb{V}(\beta)}_\pi(S,A)]
\end{aligned}
\end{equation}

where $d^\theta_{\gamma^n}(s_0,s,a):=\sum_{t=0}^{\infty}\gamma^{nt} \pi_\theta(a|s)\mathbb{P}_{\pi_\theta}[s_t=s|s_0]$ is the action-state $\gamma^n$-discounted visiting distribution. The proof of the above is very similar to \cite{Prashanth2016-fz} (lemma 1), and we have derived it in the proof of section \ref{proofbias} for the convenience of the reader. In the average reward case, under assumption 2 and using the functions $Q_\pi$ and chaotic variance $V^{\mathbb{V}(\beta)}_\pi$ defined in section \ref{AQL}, we get the same equalities except that $(S,A) \sim (d^\theta,\pi_\theta)$, where  $d^\theta$ is the stationary distribution of the underlying Markov chain generated by having actions follow $\pi_\theta$. The proof of this result is very similar to that of equations (\ref{eqac}) and is given in e.g. \cite{Prashanth2016-fz}, lemma 3.

Note that in equations (\ref{eqac}), the $Q$ functions can, as usual, be replaced by the corresponding advantage functions by subtracting the value functions.

In order to derive online Actor-Critic algorithms, we follow the approach in \cite{Prashanth2016-fz} by defining $V^{\phi}_\pi$, $V^{\phi,\mathbb{V}(\beta)}_\pi$ as parametric approximations of the value function and chaotic variance, where $\phi$ is a feature vector. We then apply theorem 2 to compute Temporal Differences that are used to estimate the advantage functions in equations (\ref{eqac}), and to update the critic parameters.

\begin{remark}
In the average reward setting, the policy gradient equations (\ref{eqac}) can be used to derive an online Actor-Critic algorithm as in \cite{Prashanth2016-fz}, because as tuples ($s_t$, $a_t$, $R_{t+1}$, $s_{t+1}$) are observed following $\pi_\theta$, we are guaranteed to end up in the limit in the stationary distribution $d^{\theta^*}$ of a local optima $\theta^*$. In the episodic setting with $\gamma=1$, the same remark holds. Nevertheless, it much less clear to the author in the infinite horizon discounted reward setting (with $\gamma<1$) how the policy gradient equations (\ref{eqac}) could be used rigorously, as they involve the $\gamma$ and $\gamma^2$-discounted visiting distribution. This subtlety is the focus of \cite{Thomas2014-ui} and was pointed out as well in \cite{Prashanth2016-fz} where they mention difficulties linked to the two policy gradient equations involving two distinct distributions: the $\gamma$ and $\gamma^2$ discounted visiting distributions, leading them to introduce new SF-based and SPSA-based algorithms.
\end{remark}

In the average reward case we use the notations of section \ref{AQL} to get algorithm \ref{cmvaca}, which is a modification of algorithm 2 in \cite{Prashanth2016-fz}. The three differences are i) the use of the chaotic TD error $\delta_{2,t}$ based on the Bellman equation in theorem 2, ii) the update of the distributional function $\widehat{R}_\psi$ as it is done in algorithm 3 and iii) the use of the policy gradient equation (\ref{eqac}) for the chaotic variance. In that case the value functions $V_\pi(s)$ and $V^{\mathbb{V}(\beta)}_\pi(s)$ are approximated by linear functions on lower dimensional spaces $\lambda_1^T\phi_1(s)$ and $\lambda_2^T\phi_2(s)$, respectively. The algorithm uses three timescales, so that parameters updated on the faster timescales can be considered as converged to their limiting value when considering updates on the slower timescales. 

In the episodic setting with $\gamma=1$, we obtain algorithm \ref{cmvace}, which is the episodic counterpart of algorithm \ref{cmvaca}.

It is to be noted that we can easily extend algorithm \ref{cmvace}, \ref{cmvaca} to the chaotic Sharpe ratio case, introduced in section \ref{CSR}. To do so, we use the theta update taken from section \ref{CSR}:
$$
\theta_{t+1}=\theta_t+ \alpha^{(1)}_n \left(\frac{\varphi_t\delta_{1,t}}{\sqrt{\delta_{2,t}}}  -\frac{1}{2}\frac{\delta_{1,t}\varphi_t\delta_{2,t}}{\delta_{2,t}^{3/2}}\right)
$$
with $\varphi_t:=\nabla \ln \pi_\theta(a_{t}|s_{t})$.

We do not discuss here the Actor-Critic extensions of the chaotic risk framework to the infinite horizon discounted reward framework in the CVaR case (\cite{Chow2018-eq}) and the mean-variance case (\cite{Prashanth2016-fz}), but they can be performed using similar techniques as used in the present section and in sections \ref{CSR}, \ref{CSR2}. In particular, the SF-based and SPSA-based techniques used in the above mentioned literature in the discounted reward framework go through with minor modifications as in algorithm \ref{cmvaca}, i) computing the chaotic TD error $\delta_{2,t}$ based on the Bellman equation in theorem 2 and ii) computing the update of the distributional function $\widehat{R}_\psi$ as it is done in algorithm 3.

\begin{algorithm}[htpb]
\caption{\textbf{Online CMV-Actor-Critic (episodic case)}}\label{cmvace}
\textbf{Input:} {Initial policy parameter $\theta_0$, initial critic parameters $\lambda_{j,0}$ ($j=1..2$), learning rates $\alpha^{(j)}_t$ ($j=1..3$), value function feature vectors $\phi_j$ ($j=1..2$), $\widehat{R}_\psi$ initialized to 0, Optional: experience replay table $\EEE$}\\
\textbf{Output:} {Approximation of the optimal policy parameter $\theta^*$}
\begin{algorithmic}[1]
	\While{$\theta$ not converged}
	    \State{observe tuple ($s_t$, $a_t$, $R_{t+1}$, $s_{t+1})$ following $\pi_\theta(\cdot|\cdot)$}
	    \State{\textbf{In tabular case:} use tuple to update $\widehat{R}_\psi$ with eq. (2); \textbf{else:} increment $\EEE$ with the tuple and update $\widehat{R}_\psi$ using algorithm 2.}
	    \State{\textbf{TD errors:}} \State{$\delta_{1,t}=R_{t+1}+ \lambda_{1,t}^T\phi_1(s_{t+1})-\lambda_{1,t}^T\phi_1(s_{t})$}
	    \State{$\delta_{2,t}=(R_{t+1}-\widehat{R}_\psi(s_t,a_t))^2+\lambda_{2,t}^T\phi_2(s_{t+1})-\lambda_{2,t}^T\phi_2(s_{t})$}
	    \State{\textbf{Critic Updates:}} \State{$\lambda_{1,t+1}=\lambda_{1,t}+\alpha^{(2)}_t \delta_{1,t}\phi_1(s_{t})$}
	    \State{$\lambda_{2,t+1}=\lambda_{2,t}+\alpha^{(2)}_t \delta_{2,t}\phi_2(s_{t})$}
	    \State{\textbf{Policy Update:}}
	    \State{$\theta_{t+1}=\theta_{t}+\alpha^{(1)}_t \nabla \ln \pi_\theta(a_{t}|s_{t})(\delta_{1,t}-\frac{\beta}{2}\delta_{2,t})$}
    \EndWhile
\end{algorithmic}
\end{algorithm}

\begin{algorithm}[htpb]
\caption{\textbf{Online CMV-Actor-Critic (average reward case)}}\label{cmvaca}
\textbf{Input:} {Initial policy parameter $\theta_0$, initial critic parameters $\lambda_{j,0}$ ($j=1..2$), learning rates $\alpha^{(j)}_t$ ($j=1..3$), value function feature vectors $\phi_j$ ($j=1..2$), $\widehat{R}_\psi$ initialized to 0, Optional: experience replay table $\EEE$.}\\
\textbf{Output:} {Approximation of the optimal policy parameter $\theta^*$}
\begin{algorithmic}[1]
	\While{$\theta$ not converged}
	    \State{observe tuple ($s_t$, $a_t$, $R_{t+1}$, $s_{t+1})$ following $\pi_\theta(\cdot|\cdot)$}
	    \State{\textbf{In tabular case:} use tuple to update $\widehat{R}_\psi$ with eq. (2); \textbf{else:} increment $\EEE$ with the tuple and update $\widehat{R}_\psi$ using algorithm 2.}
	    \State{\textbf{Average rewards:}} \State{$\rho_{t+1}=(1-\alpha^{(3)}_t)\rho_t+\alpha^{(3)}_t R_{t+1}$}
	    \State{$\sigma_{t+1}=(1-\alpha^{(3)}_t)\sigma_t+\alpha^{(3)}_t (R_{t+1}-\widehat{R}_\psi(s_t,a_t))^2$}
	    \State{\textbf{TD errors:}} \State{$\delta_{1,t}=R_{t+1}-\rho_{t+1}+\lambda_{1,t}^T\phi_1(s_{t+1})-\lambda_{1,t}^T\phi_1(s_{t})$}
	    \State{$\delta_{2,t}=(R_{t+1}-\widehat{R}_\psi(s_t,a_t))^2-\sigma_{t+1}+\lambda_{2,t}^T\phi_2(s_{t+1})-\lambda_{2,t}^T\phi_2(s_{t})$}
	    \State{\textbf{Critic Updates:}} \State{$\lambda_{1,t+1}=\lambda_{1,t}+\alpha^{(2)}_t \delta_{1,t}\phi_1(s_{t})$}
	    \State{$\lambda_{2,t+1}=\lambda_{2,t}+\alpha^{(2)}_t \delta_{2,t}\phi_2(s_{t})$}
	    \State{\textbf{Policy Update:}}
	    \State{$\theta_{t+1}=\theta_{t}+\alpha^{(1)}_t \nabla \ln \pi_\theta(a_{t}|s_{t})(\delta_{1,t}-\frac{\beta}{2}\delta_{2,t})$}
	   \EndWhile
\end{algorithmic}
\end{algorithm}

\section{Numerical values used in experiments of section 5 and additional experiments}
\label{secapplnum}
\subsection{Grid World - section 5.1}
We train the policy using $5 \cdot 10^5$ timesteps for each value of $\beta$ using an $\epsilon-$greedy exploration policy with associated probability to take a random action of $\epsilon=0.1$, a $Q$ table initialized to zero, a learning rate $\alpha_t=N(s_t,a_t)^{-0.5}$ (where $N(s_t,a_t)$ counts the number of visits to the state-action pair $(s_t,a_t)$), and set the probability of the robot going in a random direction to be $p_{error}=0.5$. We then run the trained policy for $10^5$ rollout steps in order to get the path heatmap and risk penalty heatmap displayed in figure 2. In the latter, the results are averaged over 25 NumPy RNG seeds (1001, 1003, 1006, 1008, 1009, 1010, 1015, 1018, 1021, 1022, 1025, 1028, 1029, 1031, 1033, 1035, 1037, 1038, 1039, 1040, 1041, 1043, 1047, 1048, 1049): for each such seed, we perform the training and obtain the rollouts, which are then averaged over seeds.

In figure \ref{fig-robot2} we display the path heatmaps similar as in figure 2 but for various values of $p_{error}$, representing the probability that the robot goes in a random direction. Figure \ref{fig-robot3} is similar to figure \ref{fig-robot2} but with a more severe negative reward of -50 instead of -20. As expected, in figure \ref{fig-robot2}, the higher $p_{error}$, the more reward uncertainty/stochasticity there is and the further away from the -20 reward the robot goes. The same observation goes for a lower reward -50 instead of -20 (figure \ref{fig-robot3}).

\begin{figure}[ht]
\vskip 0.1in
\begin{center} 
\centerline{\includegraphics[scale=0.4]{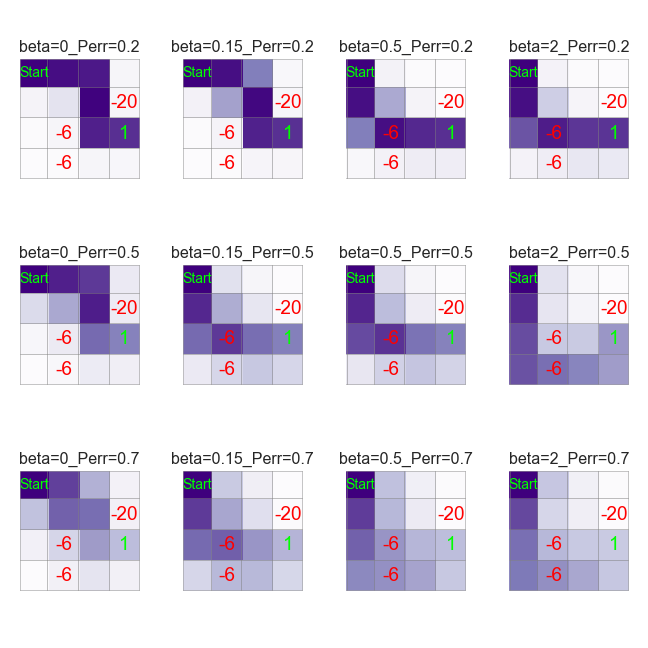}}
\caption{Path heatmaps over $10^5$ rollout steps with the learned policy for various $\beta$ and $p_{error}$}
\label{fig-robot2}
\end{center}
\vskip -0.1in
\end{figure}

\begin{figure}[ht]
\vskip 0.1in
\begin{center} 
\centerline{\includegraphics[scale=0.4]{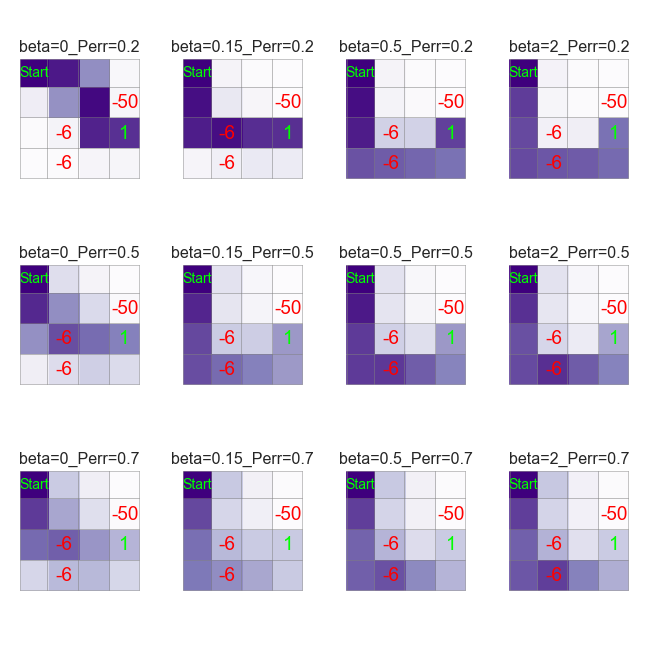}}
\caption{Path heatmaps over $10^5$ rollout steps with the learned policy for various $\beta$ and $p_{error}$, and more severe negative reward of -50 instead of -20.}
\label{fig-robot3}
\end{center}
\vskip -0.1in
\end{figure}

\subsection{Portfolio optimization - section 5.2}
As described in section 5.2, the action is defined as the quantities invested in the risk-free and risky assets $a_t:=(q^{RF}_{t},q^{R}_{t})$ with a total budget constraint $q^{RF}_{t}+q^{R}_{t} \leq q_{max}=5$ and $q^{RF}_{t},q^{R}_{t} \geq 0$. This yields a total of 21 possible actions $(0,0)$, $(1,0)$, $(0,1)$,..., $(4,1)$, $(1,4)$, $(0,5)$, $(5,0)$.

In order to train the risk-sensitive policy in section 5.2 according to the CMV-REINFORCE algorithm 3, we use a learning rate $\alpha=0.1$, batch size $B=10^4$ and perform $M=5 \cdot 10^3$ iterations on the policy parameter $\theta$. We use the Boltzmann exploration policy:
$$
\pi_\theta(a|s)=\frac{e^{\theta^T \phi(s,a)}}{\sum_{a' \in \mathbb{A}} e^{\theta^T \phi(s,a')}}
$$
where $\theta$ is of size $|\mathbb{S}|\cdot |\mathbb{A}|=3 \cdot 21=63$ and $\phi(s,a)$ is the vector with entries all zero except the entry corresponding to $(s,a)$ which is set to 1. For each value of $\beta$, we train the policy as described above and plot in figure 3, 4 the corresponding metrics for $5 \cdot 10^4$ rollout episodes of $T=20$ timesteps.

The values of the risk-free rate $\mu$ and volatility $\sigma$ in each state are reported in table \ref{tabm1}. The state transition matrices $P(s,a,s')$ depending on the action $a$ (via the quantity invested in the risky asset $q^{R}_{t}$) are displayed in table \ref{tabp1}. They have been designed such that the more we invest in the risky asset, the more likely we are to reach a higher volatility state.

\begin{table}[ht]
\caption{Section 5.2 Portfolio Optimization: risk-free rate $\mu$ and volatility $\sigma$}
\label{tabm1}
\vskip 0.1in
\begin{center}
\begin{small}
\begin{tabular}{|c|c|c|c|}
\toprule
\diagbox{Parameter}{State} & LowVol & MediumVol & HighVol \\ 
\midrule
$\mu$ & $0.2$ & $0.6$ & $1.$ \\  
\hline
$\sigma$ & $0.5$ & $1.$ & $1.5$ \\  
 \hline
\bottomrule
\end{tabular}
\end{small}
\end{center}
\vskip -0.1in
\end{table}

\begin{table}[ht]
\caption{Section 5.2 Portfolio Optimization: state transition matrix as a function of the chosen action (quantity $q^{R}_{t}$ invested in the risky asset)}
\label{tabp1}
\vskip 0.1in
\begin{center}
\begin{small}
\begin{tabular}{|c|c|c|c|}
\toprule
$q^{R}_{t}=5$ & LowVol & MediumVol & HighVol \\ 
\midrule
LowVol & $0.05$ & $0.25$ & $0.7$ \\  
\hline
MediumVol & $0.05$ & $0.25$ & $0.7$ \\   
 \hline
HighVol & $0.05$ & $0.25$ & $0.7$ \\ 
 \hline
\bottomrule
\end{tabular}
\begin{tabular}{|c|c|c|c|}
\toprule
$2<q^{R}_{t}<5$& LowVol & MediumVol & HighVol \\ 
\midrule
LowVol & $0.1$ & $0.45$ & $0.45$ \\  
\hline
MediumVol & $0.1$ & $0.45$ & $0.45$ \\    
 \hline
HighVol & $0.1$ & $0.45$ & $0.45$ \\   
 \hline
\bottomrule
\end{tabular}
\begin{tabular}{|c|c|c|c|}
\toprule
$0<q^{R}_{t}\leq 2$ & LowVol & MediumVol & HighVol \\ 
\midrule
LowVol & $1/3$ & $1/3$ & $1/3$ \\  
\hline
MediumVol & $1/3$ & $1/3$ & $1/3$ \\ 
 \hline
HighVol & $1/3$ & $1/3$ & $1/3$ \\  
 \hline
\bottomrule
\end{tabular}
\begin{tabular}{|c|c|c|c|}
\toprule
$q^{R}_{t}=0$ & LowVol & MediumVol & HighVol \\ 
\midrule
LowVol & $0.5$ & $0.45$ & $0.05$ \\  
\hline
MediumVol & $0.5$ & $0.45$ & $0.05$ \\   
 \hline
HighVol & $0.5$ & $0.45$ & $0.05$ \\
 \hline
\bottomrule
\end{tabular}
\end{small}
\end{center}
\vskip -0.1in
\end{table}

In figure 3 and 4, we compare CMV-REINFORCE (algorithm 3) with the mean-variance baseline. Denoting $\VV_{\pi_\theta}$ the variance operator, the gradient of the variance $\nabla_\theta \VV_{\pi_\theta}[\RR_0|s_0]$ needed in the mean-variance method is computed using the classical likelihood ratio technique. Indeed, $\VV_{\pi_\theta}[\RR_0|s_0]=\EE_{\pi_\theta}[\RR_0^2|s_0]-\left(\EE_{\pi_\theta}[\RR_0|s_0]\right)^2$, and hence:
$$
\nabla_\theta \VV_{\pi_\theta}[\RR_0|s_0]=\nabla_\theta \EE_{\pi_\theta}[\RR_0^2|s_0]-2 \EE_{\pi_\theta}[\RR_0|s_0] \nabla_\theta \EE_{\pi_\theta}[\RR_0|s_0]
$$
The likelihood ratio technique then yields (\cite{Tamar2012-ki}, lemma 4.2):
\begin{equation}
\begin{aligned}
\label{eqlt}
\nabla_\theta \VV_{\pi_\theta}[\RR_0|s_0]=\frac{1}{B} \sum_{b=1}^B (V_{b}-2\mu_J J_{b})\sum_{t=0}^{T_b-1} \nabla \ln \pi_\theta(a^{(b)}_{t}|s^{(b)}_{t}) 
\end{aligned}
\end{equation}
with:
$$
J_{b}:=\sum_{t=0}^{T_b-1} \gamma^{t}R^{(b)}_{t+1}, \hspace{2mm}
V_{b}:=J_b^2, \hspace{2mm}
\mu_J:= \frac{1}{B} \sum_{b=1}^B J_{b}
$$
Note that we additionally apply an optimal baseline $\ell^*$ (\cite{Necchi2016-ef}, section 4.4.1.1.) by replacing $V_{b}-2\mu_J J_{b}$ by $V_{b}-2\mu_J J_{b}-\ell^*$ in equation (\ref{eqlt}) with $\ell^*$ the vector which $k^{th}$ component is given by:
$$
\ell^*_k=\frac{\EE_{\pi_\theta}[(\RR_0^2-2\mu_J\RR_0)(\partial_{\theta_k}\ln \pi_\theta)^2]}{\EE_{\pi_\theta}[(\partial_{\theta_k}\ln \pi_\theta)^2]}
$$

\end{document}